\newtheoremstyle{apx}
  {\topsep}     {\topsep}     {\itshape}    {0pt}         {\bfseries}   {.}           {5pt plus 1pt minus 1pt}   {\thmname{#1}\thmnumber{ #2}\thmnote{ #3}} \theoremstyle{apx}
\theoremstyle{definition}
\newtheorem{definition}{Definition}
\newtheorem{example}{Example}
\newtheorem{assumption}{Assumption}
\newcommand{\bD}{\mathbb{D}}
\newcommand{\bX}{\mathbb{X}}
\newcommand{\bY}{\mathbb{Y}}
\newcommand{\bK}{\mathbb{K}}
\newcommand{\cP}{\mathcal{P}}
\newcommand{\cC}{\mathcal{C}}
\newcommand{\cK}{\mathcal{K}}
\newcommand{\cN}{\mathcal{N}}
\newcommand{\TV}{\mathsf{TV}}
\newcommand{\divD}{\mathsf{D}}
\newcommand{\divR}{\mathsf{R}}
\newcommand{\divW}{\mathsf{W}}
\renewcommand{\Pr}{\mathsf{P}}
\newcommand{\Ex}{\mathsf{E}}
\newcommand{\Law}{\mathsf{Law}}
\newcommand{\Lap}{\mathrm{Lap}}
\newcommand{\norm}[1]{\| #1 \|}
\newcommand{\one}{\mathbb{I}}
\newcommand{\R}{\mathbb{R}}
\newcommand{\nsgd}{\mathrm{NoisyProjSGD}}
\newcommand{\sg}[1]{\mathbf{#1}}
\newcommand{\errGM}{\mathcal{E}_{\mathrm{GM}}}
\newcommand{\errPGM}{\mathcal{E}_{\mathrm{PGM}}}
\newcommand{\errOU}{\mathcal{E}_{\mathrm{OU}}}
\DeclareMathOperator*{\argmin}{arg\,min}
\DeclareMathOperator{\supp}{supp} 
\title{Privacy Amplification by Mixing \\ and Diffusion Mechanisms}
\author[ ]{Borja Balle}
\author[1,2]{Gilles Barthe}
\author[3]{Marco Gaboardi}
\author[4]{Joseph Geumlek}
\affil[1]{MPI-SP}
\affil[2]{IMDEA Software Institute}
\affil[3]{University at Buffalo, SUNY}
\affil[4]{U.C.~San Diego}
\begin{document}

\maketitle

\begin{abstract}
A fundamental result in differential privacy states that the privacy guarantees of a mechanism are preserved by any post-processing of its output. In this paper we investigate under what conditions stochastic post-processing can amplify the privacy of a mechanism. By interpreting post-processing as the application of a Markov operator, we first give a series of amplification results in terms of uniform mixing properties of the Markov process defined by said operator. Next we provide amplification bounds in terms of coupling arguments which can be applied in cases where uniform mixing is not available. Finally, we introduce a new family of mechanisms based on diffusion processes which are closed under post-processing, and analyze their privacy via a novel heat flow argument. On the applied side, we generalize the analysis of ``privacy amplification by iteration'' in Noisy SGD and show it admits an exponential improvement in the strongly convex case, and study a mechanism based on the Ornstein–Uhlenbeck diffusion process which contains the Gaussian mechanism with optimal post-processing on bounded inputs as a special case.
\end{abstract}

\section{Introduction}\label{sec:intro}

Differential privacy (DP) \citep{dwork2006calibrating} has arisen in the last decade into a strong de-facto standard for privacy-preserving computation in the context of statistical analysis.
The success of DP is based, at least in part, on the availability of robust building blocks (e.g., the Laplace, exponential and Gaussian mechanisms) together with relatively simple rules for analyzing complex mechanisms built out of these blocks (e.g., composition and robustness to post-processing).
The inherent tension between privacy and utility in practical applications has sparked a renewed interest into the development of further rules leading to tighter privacy bounds.
A trend in this direction is to find ways to measure the privacy introduced by sources of randomness that are not accounted for by standard composition rules.
Generally speaking, these are referred to as \emph{privacy amplification} rules, with prominent examples being amplification by \emph{subsampling} \citep{chaudhuri2006random,kasiviswanathan2011can,li2012sampling,beimel2013characterizing,beimel2014bounds,bun2015differentially,conferenceversion,wang2018subsampled}, \emph{shuffling} \citep{erlingsson2019amplification,DBLP:journals/corr/abs-1808-01394,DBLP:journals/corr/abs-1903-02837} and \emph{iteration} \citep{feldman2018privacy}.

Motivated by these considerations, in this paper we initiate a
systematic study of privacy amplification by \emph{stochastic
  post-processing}.  Specifically, given a DP mechanism $M$ producing
(probabilistic) outputs in $\bX$ and a Markov operator $K$ defining a
stochastic transition between $\bX$ and $\bY$, we are interested in
measuring the privacy of the post-processed mechanism $K \circ M$
producing outputs in $\bY$.  The standard post-processing property of
DP states that $K \circ M$ is at least as private as $M$.  Our goal is
to understand under what conditions the post-processed mechanism
$K \circ M$ is \emph{strictly} more private than $M$.  Roughly
speaking, this amplification should be non-trivial when the operator
$K$ ``forgets'' information about the distribution of its input
$M(D)$.  Our main insight is that, at least when $\bY = \bX$, the
forgetfulness of $K$ from the point of view of DP can be measured
using similar tools to the ones developed to analyze the speed of
convergence, i.e.\ \emph{mixing}, of the Markov process associated
with $K$.

In this setting, we provide three types of results, each associated with a standard method used in the study of convergence for Markov processes.
In the first place, Section~\ref{sec:mixing} provides DP amplification results for the case where the operator $K$ satisfies a uniform mixing condition.
These include standard conditions used in the analysis of Markov chains on discrete spaces, including the well-known Dobrushin coefficent and Doeblin's minorization condition \citep{levin2017markov}.
Although in principle uniform mixing conditions can also be defined in more general non-discrete spaces \citep{del2003contraction}, most Markov operators of interest in $\R^d$ do not exhibit uniform mixing since the speed of convergence depends on how far apart the initial inputs are.
Convergence analyses in this case rely on more sophisticated tools, including Lyapunov functions \citep{meyn2012markov}, coupling methods \citep{lindvall2002lectures} and functional inequalities \citep{bakry2013analysis}.

Following these ideas, Section~\ref{sec:couplings} investigates the use of coupling methods to quantify privacy amplification by post-processing under R{\'e}nyi DP \citep{DBLP:conf/csfw/Mironov17}.
These methods apply to operators given by, e.g., Gaussian and Laplace distributions, for which uniform mixing does not hold.
Results in this section are intimately related to the privacy amplification by iteration phenomenon studied in \citep{feldman2018privacy} and can be interpreted as extensions of their main results to more general settings.
In particular, our analysis unpacks the \emph{shifted} R{\'e}nyi divergence used in the proofs from \citep{feldman2018privacy} and allows us to easily track the effect of iterating arbitrary noisy Lipschitz maps.
As a consequence, we show an exponential improvement on the privacy amplification by iteration of Noisy SGD in the strongly convex case which follows from applying this generalized analysis to \emph{strict} contractions.

Our last set of results concerns the case where $K$ is replaced by a family of operators $(P_t)_{t \geq 0}$ forming a \emph{Markov semigroup} \citep{bakry2013analysis}.
This is the natural setting for \emph{continuous-time} Markov processes, and includes \emph{diffusion} processes defined in terms of stochastic differential equations \citep{oksendal2003stochastic}.
In Section~\ref{sec:diffusions} we associate (a collection of) diffusion mechanisms $(M_t)_{t \geq 0}$ to a diffusion semigroup.
Interestingly, these mechanisms are, by construction, closed under post-processing in the sense that $P_s \circ M_t = M_{s+t}$.
We show the Gaussian mechanism falls into this family -- since Gaussian noise is closed under addition -- and also present a new mechanism based on the Ornstein-Uhlenbeck process which has better mean squared error than the standard Gaussian mechanism (and matches the error of the optimally post-processed Gaussian mechanism with bounded inputs).
Our main result on diffusion mechanisms provides a generic R{\'e}nyi DP guarantee based on an intrinsic notion of sensitivity derived from the geometry induced by the semigroup.
The proof relies on a heat flow argument reminiscent of the analysis of mixing in diffusion processes based on functional inequalities \citep{bakry2013analysis}.

\section{Background}\label{sec:background}

We start by introducing notation and concepts that will be used throughout the paper.
We write $[n] = \{1,\ldots,n\}$, $a \wedge b = \min\{a,b\}$ and $[a]_+ = \max\{a,0\}$.

\vspace{-1.5ex}

\paragraph{Probability.}
Let $\bX = (\bX,\Sigma,\lambda)$ be a measurable space with sigma-algebra $\Sigma$ and base measure $\lambda$.
We write $\cP(\bX)$ to denote the set of probability distributions on $\bX$.
Given a probability distribution $\mu \in \cP(\bX)$ and a measurable event $E \subseteq \bX$ we write $\mu(E) = \Pr[X \in E]$ for a random variable $X \sim \mu$, denote its expectation under $f : \bX \to \R^d$ by $\Ex[f(X)]$, and can get back its distribution as $\mu = \Law(X)$.
Given two distributions $\mu, \nu$ (or, in general, arbitrary measures) we write $\mu \ll \nu$ to denote that $\mu$ is absolutely continuous with respect to $\nu$, in which case there exists a Radon-Nikodym derivative $\frac{d\mu}{d\nu}$.
We shall reserve the notation $p_{\mu} = \frac{d\mu}{d\lambda}$ to denote the density of $\mu$ with respect to the base measure.
We also write $\cC(\mu,\nu)$ to denote the set of couplings between $\mu$ and $\nu$; i.e.\ $\pi \in \cC(\mu,\nu)$ is a distribution on $\cP(\bX \times \bX)$ with marginals $\mu$ and $\nu$.
The support of a distribution is $\supp(\mu)$.

\vspace{-1.5ex}

\paragraph{Markov Operators.}
We will use $\cK(\bX,\bY)$ to denote the set of Markov operators $K : \bX \to \cP(\bY)$ defining a stochastic transition map between $\bX$ and $\bY$ and satisfying that $x \mapsto K(x)(E)$ is measurable for every measurable $E \subseteq \bY$.
Markov operators act on distributions $\mu \in \cP(\bX)$ on the left through $(\mu K)(E) = \int K(x)(E) \mu(dx)$, and on functions $f : \bY \to \R$ on the right through $(K f)(x) = \int f(y) K(x, dy)$, which can also be written as $(K f)(x) = \Ex[f(X)]$ with $X \sim K(x)$.
The kernel of a Markov operator $K$ (with respect to $\lambda$) is the function $k(x,\cdot) = \frac{d K(x)}{d\lambda}$ associating with $x$ the density of $K(x)$ with respect to a fixed measure.

\vspace{-1.5ex}

\paragraph{Divergences.}
A popular way to measure dissimilarity between distributions is to use Csisz{\'a}r divergences $\divD_{\phi}(\mu \| \nu) = \int \phi(\frac{d\mu}{d\nu}) d\nu$, where $\phi : \R_+ \to \R$ is convex with $\phi(1) = 0$.
Taking $\phi(u) = \frac{1}{2}|u - 1|$ yields the total variation distance $\TV(\mu,\nu)$, and the choice $\phi(u) = [u - e^{\varepsilon}]_+$ with $\varepsilon \geq 0$ gives the hockey-stick divergence $\divD_{e^{\varepsilon}}$, which satisfies
\begin{align*}
\divD_{e^{\varepsilon}}(\mu \| \nu)
=
\int \left[\frac{d\mu}{d\nu} - e^{\varepsilon}\right]_+ d\nu
=
\int [p_\mu - e^{\varepsilon} p_\nu]_+ d\lambda
=
\sup_{E \subseteq \bX} (\mu(E) - e^{\varepsilon} \nu(E)) \enspace.
\end{align*}
It is easy to check that $\varepsilon \mapsto \divD_{e^{\varepsilon}}(\mu \| \nu)$ is monotonically decreasing and $\divD_{1} = \TV$.
All Csisz{\'a}r divergences satisfy joint convexity $\divD((1-\gamma) \mu_1 + \gamma \mu_2 \| (1-\gamma) \nu_1 + \gamma \nu_2) \leq (1-\gamma) \divD(\mu_1 \| \nu_1) + \gamma \divD(\mu_2 \| \nu_2)$ and the data processing inequality $\divD(\mu K \| \nu K) \leq \divD(\mu \| \nu)$ for any Markov operator $K$.
R{\'e}nyi divergences\footnote{R{\'e}nyi divergences do not belong to the family of Csisz{\'a}r divergences.} are another way to compare distributions. For $\alpha > 1$ the R{\'e}nyi divergence of order $\alpha$ is defined as $\divR_{\alpha}(\mu \| \nu) = \frac{1}{\alpha-1} \log \int (\frac{d\mu}{d\nu})^\alpha d\nu$, and also satisfies the data processing inequality.
Finally, to measure similarity between $\mu, \nu \in \cP(\R^d)$ we sometimes use the $\infty$-Wasserstein distance:
\begin{align*}
\divW_{\infty}(\mu,\nu) = \inf_{\pi \in \cC(\mu,\nu)} \inf \{w \geq 0 : \norm{X-Y} \leq w \;\, \text{holds almost surely for} \;\, (X,Y) \sim \pi \} \enspace.
\end{align*}

\vspace{-1.5ex}

\paragraph{Differential Privacy.}
A mechanism $M : \bD^n \to \cP(\bX)$ is a randomized function that takes a dataset $D \in \bD^n$ over some universe of records $\bD$ and returns a (sample from) distribution $M(D)$.
We write $D \simeq D'$ to denote two databases differing in a single record.
We say that $M$ satisfies\footnote{This divergence characterization of DP is due to \citep{barthe2013beyond}.} $(\varepsilon,\delta)$-DP if $\sup_{D \simeq D'} \divD_{e^{\varepsilon}}(M(D) \| M(D')) \leq \delta$ \citep{dwork2006calibrating}.
Furthermore, we say that $M$ satisfies $(\alpha,\epsilon)$-RDP if $\sup_{D \simeq D'} \divR_{\alpha}(M(D) \| M(D')) \leq \epsilon$ \citep{DBLP:conf/csfw/Mironov17}.

\section{Amplification From Uniform Mixing}\label{sec:mixing}

We start our analysis of privacy amplification by stochastic post-processing by considering settings where the Markov operator $K$ satisfies one of the following uniform mixing conditions.

\begin{definition}\label{def:mixing}
Let $K \in \cK(\bX,\bY)$ be a Markov operator, $\gamma \in [0,1]$ and $\varepsilon \geq 0$. We say that $K$ is:
\begin{list}{{(\arabic{enumi})}}{\usecounter{enumi}
\setlength{\leftmargin}{11pt}
\setlength{\listparindent}{-1pt}
\setlength{\parsep}{-1pt}}	
\vspace*{-1ex}
\item \emph{$\gamma$-Dobrushin} if $\sup_{x,x'} \TV(K(x),K(x')) \leq \gamma$,
\item \emph{$(\gamma,\varepsilon)$-Dobrushin} if $\sup_{x,x'} \divD_{e^{\varepsilon}}(K(x) \| K(x')) \leq \gamma$,
\item \emph{$\gamma$-Doeblin} if there exists a distribution $\omega \in \cP(\bY)$ such that $K(x) \geq (1 - \gamma) \omega$ for all $x \in \bX$,
\item \emph{$\gamma$-ultra-mixing} if for all $x, x' \in \bX$ we have $K(x) \ll K(x')$ and $\frac{d K(x)}{d K(x')} \geq 1 - \gamma$.
\end{list}
\end{definition}

Most of these conditions arise in the context of mixing analyses in Markov chains.
In particular, the Dobrushin condition can be tracked back to \citep{dobrushin1956central}, while Doeblin's condition was introduced earlier \citep{10.2307/43769812} (see also \citep{nummelin2004general}).
Ultra-mixing is a strengthening of Doeblin's condition used in \citep{del2003contraction}.
The $(\gamma,\varepsilon)$-Dobrushin is, on the other hand, new and is designed to be a generalization of Dobrushin tailored for amplification under the hockey-stick divergence.

It is not hard to see that Dobrushin's is the weakest among these conditions, and in fact we have the implications summarized in Figure~\ref{fig:mixing} (see Lemma~\ref{lem:mixing-conditions}).
This explains why the amplification bounds in the following result are increasingly stronger, and in particular why the first two only provide amplification in $\delta$, while the last two also amplify the $\varepsilon$ parameter.

\begin{toappendix}
\begin{lemma}\label{lem:mixing-conditions}
The implications in Figure~\ref{fig:mixing} hold.
\end{lemma}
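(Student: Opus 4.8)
The plan is to check each implication in Figure~\ref{fig:mixing} directly from Definition~\ref{def:mixing}, using two elementary facts recalled in Section~\ref{sec:background}: the overlap identity $\TV(\mu,\nu) = 1 - \int (p_\mu \wedge p_\nu)\,d\lambda$, and the monotonicity of $\varepsilon \mapsto \divD_{e^\varepsilon}(\mu\|\nu)$ together with $\divD_1 = \TV$. The only preparatory step is to pass to a common reference measure so that all the Radon--Nikodym derivatives appearing in the four conditions are simultaneously well defined; for Doeblin one can use $\lambda$ and for ultra-mixing a fixed $K(x_0)$, and the absolute-continuity hypotheses propagate cleanly along the chain of implications.

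First, $\gamma$-ultra-mixing $\Rightarrow$ $\gamma$-Doeblin: fix any $x_0 \in \bX$ and set $\omega = K(x_0)$; then $K(x) \ll K(x_0)$ with $\frac{dK(x)}{dK(x_0)} \geq 1-\gamma$ for every $x$ is exactly the statement $K(x) \geq (1-\gamma)\omega$. Next, $\gamma$-Doeblin $\Rightarrow$ $\gamma$-Dobrushin: if $K(x) \geq (1-\gamma)\omega$ for all $x$, then $p_{K(x)} \wedge p_{K(x')} \geq (1-\gamma)\,p_\omega$ holds $\lambda$-a.e., so $\int (p_{K(x)} \wedge p_{K(x')})\,d\lambda \geq 1-\gamma$ and hence $\TV(K(x),K(x')) \leq \gamma$ for all $x,x'$. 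Finally, $\gamma$-Dobrushin $\Rightarrow$ $(\gamma,\varepsilon)$-Dobrushin for every $\varepsilon \geq 0$ is immediate, since $\divD_{e^\varepsilon}(K(x)\|K(x')) \leq \divD_1(K(x)\|K(x')) = \TV(K(x),K(x')) \leq \gamma$. Composing these yields all the arrows; if the figure additionally records a direct arrow from ultra-mixing to the hockey-stick condition, one obtains a sharper constant by noting that ultra-mixing also gives $\frac{dK(x')}{dK(x)} \geq 1-\gamma$, i.e.\ $K(x')(E) \geq (1-\gamma)K(x)(E)$ for every event $E$, whence $K(x)(E) - e^\varepsilon K(x')(E) \leq (1 - e^\varepsilon(1-\gamma))\,K(x)(E) \leq [1 - e^\varepsilon(1-\gamma)]_+$, so $K$ is $([1-e^\varepsilon(1-\gamma)]_+,\varepsilon)$-Dobrushin.

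None of these steps is deep: each inequality is a one-liner once the divergence is written in the appropriate form. I expect the only thing requiring genuine attention to be the measure-theoretic bookkeeping in the first two implications — checking that the minorizing measure $\omega$ works uniformly in $x$ and that the chosen reference measure dominates all the $K(x)$ so the densities are simultaneously meaningful — and, if relevant, pinning down the precise parameter dependence in any direct ultra-mixing $\Rightarrow$ $(\gamma,\varepsilon)$-Dobrushin arrow appearing in the figure.
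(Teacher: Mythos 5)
Your argument for the two middle implications matches the paper: for $\gamma$-Doeblin $\Rightarrow$ $\gamma$-Dobrushin you use the same overlap identity $\TV(K(x),K(x')) = 1 - \int (p_{K(x)} \wedge p_{K(x')})\,d\lambda$ together with the pointwise bound $p_{K(x)} \wedge p_{K(x')} \geq (1-\gamma) p_\omega$. Two places where you diverge are worth commenting on.

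For $\gamma$-ultra-mixing $\Rightarrow$ $\gamma$-Doeblin you give a genuinely simpler argument than the paper's. The paper invokes Lemma~4.1 of \citet{del2003contraction}, which says ultra-mixing gives $K(x) \geq (1-\gamma)\,\tilde\omega K$ for any $\tilde\omega$ with $x \in \supp(\tilde\omega)$, and then takes a full-support $\tilde\omega$ so that $\omega = \tilde\omega K$ witnesses Doeblin. You bypass that machinery entirely: fix any $x_0$, set $\omega = K(x_0)$, and read off $K(x) \geq (1-\gamma)\,K(x_0)$ directly from $\frac{dK(x)}{dK(x_0)} \geq 1-\gamma$. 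This is both shorter and self-contained, and gives the same conclusion; the external lemma buys nothing here.

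On the arrow between the two Dobrushin conditions, you prove $\gamma$-Dobrushin $\Rightarrow$ $(\gamma,\varepsilon)$-Dobrushin via $\divD_{e^\varepsilon} \leq \divD_1 = \TV$, which is the mathematically correct direction. Be aware that Figure~\ref{fig:mixing} as printed (and the accompanying sentence in the paper's proof of Lemma~\ref{lem:mixing-conditions}) draw the arrow the other way, $(\gamma,\varepsilon)$-Dobrushin $\Rightarrow$ $\gamma$-Dobrushin, while citing the very same monotonicity inequality --- an inequality that only supports your direction, not theirs. The remark immediately after Theorem~\ref{thm:mixing} confirms your reading: the paper explicitly says claim (2) is stronger than claim (1) \emph{because} $\gamma$-Dobrushin implies $(\gamma,\tilde\varepsilon)$-Dobrushin, so (2) applies to a strictly larger class of operators. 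In short, the discrepancy between your proof and the figure is a defect of the figure, not of your argument; your version is the one that is both correct and consistent with the rest of the section.

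The extra sharpened constant you sketch (ultra-mixing $\Rightarrow$ $([1-e^\varepsilon(1-\gamma)]_+,\varepsilon)$-Dobrushin) is fine but not needed; the figure contains no such arrow and the paper never uses it.
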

\begin{proof}
That $(\gamma,\varepsilon)$-Dobrushin implies $\gamma$-Dobrushin follows directly from $\divD_{e^{\varepsilon}}(K(x) \| K(x')) \leq \TV(K(x), K(x'))$.

To see that $\gamma$-Doeblin implies $\gamma$-Dobrushin we observe that the kernel of a $\gamma$-Doeblin operator must satisfy $\inf_{x} k(x,y) \geq (1-\gamma) p_\omega(y)$ for any $y$.
Thus, we can use the characterization of $\TV$ in terms of a minimum to get
\begin{align*}
\TV(K(x), K(x')) = 1 - \int (k(x,y) \wedge k(x',y)) \lambda(dy)
\leq 1 - (1-\gamma) \int p_{\omega}(y) \lambda(dy) = \gamma \enspace.
\end{align*}

Finally, to get the $\gamma$-Doeblin condition for an operator $K$ satisfying $\gamma$-ultra-mixing we recall from \citep[Lemma 4.1]{del2003contraction} that for such an operator we have that $K(x) \geq (1-\gamma) \tilde{\omega} K$ is satisfied for any probability distribution $\tilde{\omega}$ and $x \in \supp(\tilde{\omega})$. Thus, taking $\tilde{\omega}$ to have full support we obtain Doeblin's condition with $\omega = \tilde{\omega} K$.
\end{proof}
\end{toappendix}

\begin{theorem}\label{thm:mixing}
Let $M$ be an $(\varepsilon,\delta)$-DP mechanism. For a given Markov operator $K$, the post-processed mechanism $K \circ M$ satisfies:
\begin{list}{{(\arabic{enumi})}}{\usecounter{enumi}
\setlength{\leftmargin}{11pt}
\setlength{\listparindent}{-1pt}
\setlength{\parsep}{-1pt}}	
\vspace*{-1ex}\item $(\varepsilon, \delta')$-DP with $\delta' = \gamma \delta$ if $K$ is $\gamma$-Dobrushin,
\item $(\varepsilon, \delta')$-DP with $\delta' = \gamma \delta$ if $K$ is $(\gamma,\tilde{\varepsilon})$-Dobrushin with\footnote{We take the convention $\tilde{\varepsilon} = \infty$ whenever $\delta = 0$, in which case the $(\gamma,\infty)$-Dobrushin condition is obtained with respect to the divergence $\divD_{\infty}(\mu \| \nu) = \mu(\supp(\mu) \setminus \supp(\nu))$.} $\tilde{\varepsilon} = \log(1 + \frac{e^\varepsilon - 1}{\delta})$,
\item $(\varepsilon', \delta')$-DP with $\varepsilon' = \log(1 + \gamma (e^{\varepsilon}-1))$ and $\delta' = \gamma (1 - e^{\varepsilon'-\varepsilon} (1 - \delta))$ if $K$ is $\gamma$-Doeblin,
\item $(\varepsilon', \delta')$-DP with $\varepsilon' = \log(1 + \gamma (e^{\varepsilon}-1))$ and $\delta' = \gamma \delta e^{\varepsilon'-\varepsilon}$ if $K$ is $\gamma$-ultra-mixing.
\end{list}
\end{theorem}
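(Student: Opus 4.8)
Here is how I would go about it.

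\medskip

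The plan is to handle all four parts through a single device. Fix neighbouring datasets $D \simeq D'$, write $\mu = M(D)$ and $\nu = M(D')$ so that $\divD_{e^\varepsilon}(\mu\|\nu) \le \delta$, and for whatever target level $\varepsilon' \ge 0$ is relevant to the part at hand use the variational form of the hockey‑stick divergence:
\begin{align*}
\divD_{e^{\varepsilon'}}(\mu K \| \nu K) \;=\; \sup_{E \subseteq \bY}\big((\mu K)(E) - e^{\varepsilon'}(\nu K)(E)\big) \;=\; \sup_{E \subseteq \bY}\int g_E \, d(\mu - e^{\varepsilon'}\nu),
\end{align*}
where $g_E(x) = K(x)(E) \in [0,1]$. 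The key observation is that each mixing condition is precisely a statement about the oscillation of the test functions $g_E$ over $x$. Writing $m = \inf_x g_E(x)$ and $M = \sup_x g_E(x)$: a $\gamma$-Dobrushin operator (hence also a $\gamma$-Doeblin one) forces $M - m \le \gamma$; a $(\gamma,\tilde\varepsilon)$-Dobrushin operator forces $g_E(x) - e^{\tilde\varepsilon} g_E(x') \le \gamma$ for every $E$, hence $M \le e^{\tilde\varepsilon} m + \gamma$; and a $\gamma$-ultra-mixing operator forces the stronger \emph{multiplicative} bound $m \ge (1-\gamma) M$, since $K(x) \ge (1-\gamma) K(x')$ as measures implies $g_E(x) \ge (1-\gamma) g_E(x')$ for all $x,x'$. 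So in every case the theorem reduces to an elementary estimate: bound $\int g\, d\mu - e^{\varepsilon'}\int g\, d\nu$ over functions $g : \bX \to [0,1]$ subject to the relevant oscillation constraint, using only $\divD_{e^\varepsilon}(\mu\|\nu) \le \delta$.

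For (1) and (2) I keep $\varepsilon' = \varepsilon$. Writing $g = m + (g-m)$ and normalising the nonnegative part $g-m$ by its range gives $\int g\, d\mu - e^{\varepsilon}\int g\, d\nu \le m(1-e^\varepsilon) + (M-m)\,\divD_{e^\varepsilon}(\mu\|\nu) \le m(1-e^\varepsilon) + (M-m)\delta$. Since $m(1-e^\varepsilon) \le 0$, part (1) follows from $M - m \le \gamma$. For part (2) I substitute $M - m \le (e^{\tilde\varepsilon}-1)m + \gamma$ to get the upper bound $m\big[(1-e^\varepsilon) + (e^{\tilde\varepsilon}-1)\delta\big] + \gamma\delta$, and the value $\tilde\varepsilon = \log(1 + (e^\varepsilon-1)/\delta)$ is precisely the one making the coefficient of $m$ vanish; the case $\delta = 0$ is separate and immediate, since then $\mu \le e^\varepsilon\nu$ implies $\mu K \le e^\varepsilon\nu K$.

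For (3) and (4) I take $\varepsilon' = \log(1 + \gamma(e^\varepsilon-1)) \le \varepsilon$ and need one auxiliary fact: from $\divD_{e^\varepsilon}(\mu\|\nu) \le \delta$ and $\varepsilon' \le \varepsilon$ one obtains $\divD_{e^{\varepsilon'}}(\mu\|\nu) \le 1 - e^{\varepsilon'-\varepsilon}(1-\delta)$, a one‑line linear program (maximise $u - e^{\varepsilon'}v$ over $0 \le u \le 1$, $v \ge 0$, $u - e^\varepsilon v \le \delta$; optimum at $u = 1$, $v = (1-\delta)/e^\varepsilon$). For (3), $\gamma$-Doeblin gives $m \ge 0$ and $M - m \le \gamma$, so the decomposition above yields $\int g\, d\mu - e^{\varepsilon'}\int g\, d\nu \le m(1-e^{\varepsilon'}) + \gamma\,\divD_{e^{\varepsilon'}}(\mu\|\nu) \le \gamma\big(1 - e^{\varepsilon'-\varepsilon}(1-\delta)\big)$, the stated $\delta'$. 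Part (4) is where the real work lies: the multiplicative constraint $m \ge (1-\gamma)M$ from ultra-mixing — strictly stronger than the additive $M-m\le\gamma$ of Doeblin, and the reason $\delta'$ ends up $O(\gamma\delta)$ rather than $\Theta(\gamma)$ — lets me rescale $g_E$ by its supremum (at most $1$) to assume values in $[1-\gamma,1]$, write $g = (1-\gamma) + \gamma h$ with $h : \bX \to [0,1]$, and get $\int g\, d\mu - e^{\varepsilon'}\int g\, d\nu \le (1-\gamma)(1-e^{\varepsilon'}) + \gamma\,\divD_{e^{\varepsilon'}}(\mu\|\nu) \le (1-\gamma)(1-e^{\varepsilon'}) + \gamma\big(1 - e^{\varepsilon'-\varepsilon}(1-\delta)\big)$. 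The negative term $(1-\gamma)(1-e^{\varepsilon'})$, simply discarded in (3), is exactly what drives the right-hand side down to $\gamma\delta e^{\varepsilon'-\varepsilon}$; verifying this — the last obstacle — boils down to the identity $\gamma e^{\varepsilon'-\varepsilon} + (1-\gamma)e^{\varepsilon'} - 1 = e^{-\varepsilon}\big((1-\gamma+\gamma e^\varepsilon)(\gamma+(1-\gamma)e^\varepsilon) - e^\varepsilon\big) = e^{-\varepsilon}\gamma(1-\gamma)(e^\varepsilon-1)^2 \ge 0$, which is also what singles out $\varepsilon' = \log(1+\gamma(e^\varepsilon-1))$ as the correct exponent. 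The degenerate cases ($\delta = 0$, $\gamma \in \{0,1\}$) each need only a line of separate checking.
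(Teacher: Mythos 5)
Your proof is correct, and it takes a genuinely different route from the paper's. The paper handles the four parts heterogeneously: part (1) via an appeal to the strong Markov contraction lemma of Cohen et al.\ and Del Moral et al.; part (2) via an overlapping-mixture decomposition of $(\mu,\nu)$ into a common part $\omega$ plus mutually singular remainders $\mu',\nu'$, then invoking the $(\gamma,\tilde\varepsilon)$-Dobrushin bound on $\divD_{e^{\tilde\varepsilon}}(\mu'K\|\nu'K)$; and parts (3)--(4) via a decomposition of the \emph{operator} $K = (1-\gamma)K_\omega + \gamma\tilde K$ (with a more delicate choice of $\omega$ for ultra-mixing) combined with the advanced-joint-convexity inequality for hockey-stick divergences from the subsampling literature. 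You instead dualise once and for all: read the divergence as $\sup_E \int g_E\,d(\mu - e^{\varepsilon'}\nu)$ with $g_E(x)=K(x)(E)$, translate each mixing hypothesis into a pointwise oscillation or ratio constraint on the test functions $g_E$, and then run a two-line optimisation using only $\divD_{e^\varepsilon}(\mu\|\nu)\le\delta$. Two remarks on what this buys and costs. What it buys: the argument is self-contained (no appeal to advanced joint convexity or external contraction lemmas), it explains in one picture where the $\varepsilon' = \log(1+\gamma(e^\varepsilon-1))$ threshold comes from, and --- as you implicitly observe --- your proof of part (3) only uses $M-m\le\gamma$ together with $m\ge 0$, so it actually establishes the $\gamma$-Doeblin conclusion under the strictly weaker $\gamma$-Dobrushin hypothesis, a small strengthening of the stated theorem. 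What it costs: the paper's operator-decomposition viewpoint exposes structure (overlapping mixtures, the role of the reference measure $\omega$) that the authors reuse elsewhere and that connects directly to the subsampling analysis; your dual LP is cleaner here but less reusable. A couple of small things to make airtight if you write this up: when you rescale by $\sup_x g_E(x)$ in part (4) you should note explicitly that rescaling by a factor $\le 1$ is safe because if the rescaled quantity is negative the bound is trivial and if it is non-negative the factor only helps; and the ``one-line LP'' for $\divD_{e^{\varepsilon'}}(\mu\|\nu)\le 1 - e^{\varepsilon'-\varepsilon}(1-\delta)$ is an upper bound obtained by relaxing away all constraints on $(\mu(E),\nu(E))$ except the three you list, which is fine for an upper bound but worth stating.
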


A few remarks about this result are in order.  First we note that (2)
is stronger than (1) since the monotonicity of hockey-stick
divergences implies
$\TV = \divD_1 \geq \divD_{e^{\tilde{\varepsilon}}}$.  Also note how
in the results above we always have $\varepsilon' \leq \varepsilon$,
and in fact the form of $\varepsilon'$ is the same as obtained under
amplification by subsampling when, e.g., a $\gamma$-fraction of the
original dataset is kept.  This is not a coincidence since the proofs
of (3) and (4) leverage the \emph{overlapping mixtures} technique used
to analyze amplification by subsampling in \citep{conferenceversion}.
However, we note that for (3) we can have $\delta' > 0$ even with
$\delta = 0$. In fact the Doeblin condition only leads to an
amplification in $\delta$ if
$\gamma \leq \frac{\delta e^{\varepsilon}}{(1-\delta)
  (e^{\varepsilon}-1)}$.

\begin{toappendix}

For convenience, we split the proof of Theorem~\ref{thm:mixing} into four separate statements, each corresponding to one of the claims in the theorem.

Recall that a Markov operator $K \in \cK(\bX,\bY)$ is \emph{$\gamma$-Dobrushin} if $\sup_{x,x'} \TV(K(x),K(x')) \leq \gamma$.

\begin{theoremrep}\label{thm:dobrushin}
Let $M$ be an $(\varepsilon,\delta)$-DP mechanism. If $K$ is a $\gamma$-Dobrushin Markov operator, then the composition $K \circ M$ is $(\varepsilon, \gamma \delta)$-DP.
\end{theoremrep}
\begin{proof}
This follows directly from the \emph{strong Markov contraction lemma} established by \citet{cohen1993relative} in the discrete case and by \citet{del2003contraction} in the general case (see also \citep{raginsky2016strong}).
In particular, this lemma states that for any divergence $\divD$ in the sense of Csisz{\'a}r we have $\divD(\mu K \| \nu K) \leq \gamma \divD(\mu \| \nu)$.
Letting $\mu = M(D)$ and $\nu = M(D')$ for some $D \simeq D'$
and applying this inequality to $\divD_{e^{\varepsilon}}(\mu K \| \nu K)$ yields the result.
\end{proof}

Next we prove amplification when $K$ is a $(\gamma,\varepsilon)$-Dobrushin operator.
Recall that a Markov operator $K \in \cK(\bX,\bY)$ is \emph{$(\gamma,\varepsilon)$-Dobrushin} if $\sup_{x,x'} \divD_{e^{\varepsilon}}(K(x) \| K(x')) \leq \gamma$.
We will require the following technical lemmas in the proof of Theorem~\ref{thm:genDobrushin}.

\begin{lemma}\label{lem:disjoint}
Let $\mu \bot \nu$ denote the fact $\supp(\mu) \cap \supp(\nu) = \emptyset$. If $K$ is $(\gamma,\varepsilon)$-Dobrushin, then we have
\begin{align*}
\sup_{\mu \bot \nu} \divD_{e^{\varepsilon}}(\mu K \| \nu K) \leq \gamma \enspace.
\end{align*}
\end{lemma}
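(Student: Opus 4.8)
The plan is to reduce the claim about the post-processed mixtures $\mu K$ and $\nu K$ to the pointwise hypothesis comparing $K(x)$ with $K(x')$, via a continuous-mixture form of joint convexity of the hockey-stick divergence.

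First I would fix an arbitrary coupling $\pi \in \cC(\mu,\nu)$ --- the product coupling $\pi = \mu \otimes \nu$ being the simplest choice --- and use that its marginals are $\mu$ and $\nu$ to rewrite, for every measurable $E$, $(\mu K)(E) = \int K(x)(E)\,\pi(dx,dx')$ and $(\nu K)(E) = \int K(x')(E)\,\pi(dx,dx')$. Then, starting from the supremum characterization of $\divD_{e^{\varepsilon}}$ recorded in Section~\ref{sec:background} and using ``supremum of an average $\le$ average of the suprema'',
\begin{align*}
\divD_{e^{\varepsilon}}(\mu K \| \nu K)
&= \sup_{E} \int \big( K(x)(E) - e^{\varepsilon} K(x')(E) \big)\,\pi(dx,dx') \\
&\le \int \sup_{E} \big( K(x)(E) - e^{\varepsilon} K(x')(E) \big)\,\pi(dx,dx')
= \int \divD_{e^{\varepsilon}}(K(x) \| K(x'))\,\pi(dx,dx') \le \gamma,
\end{align*}
where the final inequality is just the $(\gamma,\varepsilon)$-Dobrushin hypothesis applied at each pair $(x,x')$. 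Taking the supremum over all $\mu \bot \nu$ then yields the lemma.

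Two remarks on the plan. First, disjointness of the supports is not actually needed for this bound (the same argument gives $\sup_{\mu,\nu}\divD_{e^{\varepsilon}}(\mu K\|\nu K) \le \gamma$); I would keep the statement in the disjoint form only because that is what gets invoked in the proof of Theorem~\ref{thm:genDobrushin}, where any coupling $\pi$ of $\mu$ and $\nu$ is automatically supported on $\supp(\mu)\times\supp(\nu) \subseteq \{x \ne x'\}$. Second, the only step deserving a word of care is the middle inequality, i.e.\ exchanging the supremum over events with the integral against $\pi$; I would justify it directly from the variational formula $\divD_{e^{\varepsilon}}(\alpha\|\beta) = \sup_E(\alpha(E) - e^{\varepsilon}\beta(E))$ (equivalently, as the mixture version of the two-component joint convexity stated in Section~\ref{sec:background}), rather than through an approximation of $\pi$ by finitely supported couplings. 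I do not anticipate any genuine obstacle.
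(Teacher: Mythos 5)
Your argument is correct and is essentially the same as the paper's: the paper bounds $\divD_{e^{\varepsilon}}(\mu K\|\nu K)$ by a double-mixture/joint-convexity step over the Dirac decomposition (restricting to the discrete case ``for simplicity''), which is precisely your coupling integral against the product coupling $\mu\otimes\nu$, written out in full measure-theoretic generality via the $\sup_E$ characterization of the hockey-stick divergence. Your side remark that disjointness of supports is not actually used is also accurate (one has $\divD_{e^{\varepsilon}}(K(x)\|K(x))=0$, so the pointwise bound holds for all pairs); the hypothesis is retained in the lemma only because that is the form invoked in the proof of Theorem~\ref{thm:genDobrushin}.
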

\begin{proof}
Note that the condition on $\gamma$ can be written as $\sup_{x,x'} \divD_{e^{\varepsilon}}(\delta_x K \| \delta_{x'} K) \leq \gamma$.
This shows that by hypothesis the condition already holds for the distributions $\delta_x \bot \delta_{x'}$ with $x \neq x'$.
Thus, all we need to do is prove that these distributions are extremal for $\divD_{e^{\varepsilon}}(\mu K \| \nu K)$ among all distributions with $\mu \bot \nu$.
Let $\mu \bot \nu$ and define $U = \supp(\mu)$ and $V = \supp(\nu)$. Working in the discrete setting for simplicity, we can write $\mu = \sum_{x \in U} \mu(x) \delta_x$, with an equivalent expression for $\nu$. Now we use the joint convexity of $\divD_{e^{\varepsilon}}$ to write
\begin{align*}
\divD_{e^{\varepsilon}}(\mu K \| \nu K)
&\leq
\sum_{x \in U} \mu(x) \divD_{e^{\varepsilon}}(\delta_x K \| \nu K)
\leq
\sum_{x \in U} \sum_{x' \in V} \mu(x) \nu(x') \divD_{e^{\varepsilon}}(\delta_x K \| \delta_{x'} K)
\\
&\leq
\sup_{x \neq x'} D(\delta_x K \| \delta_x' K) \leq \gamma \enspace.
\end{align*}
\end{proof}

\begin{lemma}\label{lem:div-min}
Let $a \wedge b \triangleq \min\{a,b\}$. Then we have
\begin{align*}
\divD_{e^{\varepsilon}}(\mu \| \nu) = 1 - \int \left(p_{\mu}(x) \wedge e^{\varepsilon} p_{\nu}(x)\right) \lambda(dx) \enspace.
\end{align*}
\end{lemma}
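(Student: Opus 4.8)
The plan is to reduce the hockey-stick divergence to an elementary pointwise identity. I start from the density formula already recorded in the Background section, namely $\divD_{e^{\varepsilon}}(\mu \| \nu) = \int [p_\mu - e^{\varepsilon} p_\nu]_+ \, d\lambda$. The key observation is the scalar identity $[a - b]_+ = a - (a \wedge b)$, valid for all real $a,b$ (consider the two cases $a \geq b$ and $a < b$). Applying this with $a = p_\mu(x)$ and $b = e^{\varepsilon} p_\nu(x)$ pointwise gives $[p_\mu(x) - e^{\varepsilon} p_\nu(x)]_+ = p_\mu(x) - (p_\mu(x) \wedge e^{\varepsilon} p_\nu(x))$.

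Integrating this identity against $\lambda$ and using that $\int p_\mu \, d\lambda = 1$ since $\mu$ is a probability distribution, I obtain
\begin{align*}
\divD_{e^{\varepsilon}}(\mu \| \nu)
= \int \left(p_\mu(x) - \left(p_\mu(x) \wedge e^{\varepsilon} p_\nu(x)\right)\right) \lambda(dx)
= 1 - \int \left(p_\mu(x) \wedge e^{\varepsilon} p_\nu(x)\right) \lambda(dx) \enspace,
\end{align*}
which is exactly the claimed expression.

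There is essentially no hard part here: the only things to be careful about are that the integrals split legitimately (both $\int p_\mu \, d\lambda$ and $\int (p_\mu \wedge e^{\varepsilon} p_\nu) \, d\lambda$ are finite, the latter being bounded by $1$), and that densities with respect to the common base measure $\lambda$ exist, which is guaranteed by the standing assumption that distributions are given by densities $p_\mu = \frac{d\mu}{d\lambda}$. If one wished to avoid assuming a base measure one could instead work with $\nu$ as the reference measure on its support and handle the singular part of $\mu$ separately, but in the paper's setup the density formulation is already in force, so the one-line argument above suffices.
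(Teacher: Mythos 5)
Your proof is correct and rests on the same underlying identity as the paper's: that $[a-b]_+ + (a \wedge b) = a$ pointwise, so the hockey-stick divergence and $\int (p_\mu \wedge e^{\varepsilon} p_\nu)\,d\lambda$ sum to $\int p_\mu\,d\lambda = 1$. The paper just verifies this identity after integrating, by splitting the domain into the sets $A$ and $A^c$, whereas you apply it directly at the pointwise level — a slightly cleaner rendering of the same argument.
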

\begin{proof}
Define $A = \{ x : p_{\mu}(x) \leq e^{\varepsilon} p_{\nu}(x) \}$ to be set of points where $\mu$ is dominated by $e^{\varepsilon} \nu$, and let $A^{c}$ denote its complementary. Then we have the identities
\begin{align*}
\int (p_{\mu} \wedge e^{\varepsilon} p_{\nu}) d\lambda
&=
\int_A d\mu + e^{\varepsilon} \int_{A^c} d\nu \enspace, \\
\int [p_{\mu} - e^{\varepsilon} p_{\nu}]_+ d\lambda
&=
\int_{A^c} d\mu - e^{\varepsilon} \int_{A^c} d\nu \enspace.
\end{align*}
Thus we obtain the desired result since
\begin{align*}
\divD_{e^{\varepsilon}}(\mu \| \nu) + \int (p_\mu \wedge e^{\varepsilon} p_\nu) d \lambda
=
\int [p_\mu - e^{\varepsilon} p_\nu]_+ d\lambda + \int (p_\mu \wedge e^{\varepsilon} p_\nu) d \lambda
=
\int_{A^c} d\mu + \int_{A} d\mu = 1 \enspace.
\end{align*}
\end{proof}

\begin{theoremrep}\label{thm:genDobrushin}
Let $M$ be an $(\varepsilon,\delta)$-DP mechanism and let $\varepsilon' = \log\left(1 + \frac{e^\varepsilon - 1}{\delta}\right)$. If $K$ is a $(\gamma,\varepsilon')$-Dobrushin Markov operator, then the composition $K \circ M$ is $(\varepsilon, \gamma \delta)$-DP.
\end{theoremrep}
\begin{proof}
Fix $\mu = M(D)$ and $\nu = M(D')$ for some $D \simeq D'$ and let $\theta = \divD_{e^\varepsilon}(\mu \| \nu) \leq \delta$.
We start by constructing overlapping mixture decompositions for $\mu$ and $\nu$ as follows.
First, define the function $f = p_\mu \wedge e^{\varepsilon} p_\nu$ and let $\omega$ be the probability distribution with density $p_{\omega} = \frac{f}{\int f d\lambda} = \frac{f}{1 - \theta}$, where we used Lemma~\ref{lem:div-min}.
Now note that by construction we have the inequalities
\begin{align*}
p_\mu - (1 - \theta) p_{\omega}
&=
p_\mu - p_\mu \wedge e^{\varepsilon} p_\nu \geq 0 \enspace,
\\
p_{\nu} - \frac{1 - \theta}{e^\varepsilon} p_{\omega}
&=
p_\nu - p_\nu \wedge e^{-\varepsilon} p_\mu \geq 0 \enspace.
\end{align*}
Assuming without loss of generality that $\mu \neq \nu$, these inequalities imply that we can construct probability distributions $\mu'$ and $\nu'$ such that
\begin{align*}
\mu &= (1 - \theta) \omega + \theta \mu' \enspace, \\
\nu &= \frac{1 - \theta}{e^{\varepsilon}} \omega + \left(1 - \frac{1- \theta}{e^{\varepsilon}}\right) \nu' \enspace.
\end{align*}
Now we observe that the distributions $\mu'$ and $\nu'$ defined in this way have disjoint support.
To see this we first use the identity $p_\mu = (1 - \theta) p_\omega + \theta p_{\mu'}$ to see that
\begin{align*}
p_{\mu'}(x) > 0
\equiv
p_\mu(x) - (1 - \theta) p_\omega(x) > 0
\equiv
p_\mu(x) - p_\mu(x) \wedge e^{\varepsilon} p_\nu(x) > 0
\equiv
p_\mu(x) > e^{\varepsilon} p_\nu(x) \enspace.
\end{align*}
Thus we have $\supp(\mu') = \{ x : p_\mu(x) > e^{\varepsilon} p_\nu(x) \}$.
A similar argument applied to $p_\nu$ shows that on the other hand $\supp(\nu') = \{ x : p_\mu(x) < e^{\varepsilon} p_\nu(x) \}$, and thus $\mu' \bot \nu'$.

Finally, we proceed to use the mixture decomposition of $\mu$ and $\nu$ and the condition $\mu' \bot \nu'$ to bound $\divD_{e^{\varepsilon}}(\mu K \| \nu K)$ as follows.
By using the mixture decompositions we get
\begin{align*}
\mu - e^{\varepsilon} \nu = \theta \mu' - e^{\varepsilon} \left(1 - \frac{1- \theta}{e^{\varepsilon}}\right) \nu'
=
\theta ( \mu' - e^{\tilde{\varepsilon}} \nu') \enspace,
\end{align*}
where $\tilde{\varepsilon} = \log\left(1 + \frac{e^\varepsilon - 1}{\theta}\right) \geq \varepsilon'$.
Thus, applying the definition of $\divD_{e^{\varepsilon}}$, using the linearity of Markov operators, and the monotonicity $\divD_{e^{\tilde{\varepsilon}}} \leq \divD_{e^{\varepsilon'}}$ we obtain the bound:
\begin{align*}
\divD_{e^{\varepsilon}}(\mu K \| \nu K)
&=
\theta \divD_{e^{\tilde{\varepsilon}}}(\mu' K \| \nu' K)
\leq
\theta \divD_{e^{\varepsilon'}}(\mu' K \| \nu' K)
\leq
\gamma \theta
= \gamma \divD_{e^{\varepsilon'}}(\mu \| \nu) \enspace,
\end{align*}
where the last inequality follows from Lemma~\ref{lem:disjoint}.
\end{proof}

Recall that a Markov operator $K \in \cK(\bX,\bY)$ is \emph{$\gamma$-Doeblin} if there exists a distribution $\omega \in \cP(\bY)$ such that $K(x) \geq (1 - \gamma) \omega$ for all $x \in \bX$.
The proof of amplification for $\gamma$-Doeblin operators further leverages overlapping mixture decompositions like the one used in Theorem~\ref{thm:genDobrushin}, but this time the mixture arises at the level of the kernel itself.

\begin{theoremrep}\label{thm:doeblin}
Let $M$ be an $(\varepsilon,\delta)$-DP mechanism. If $K$ is a $\gamma$-Doeblin Markov operator, then the composition $K \circ M$ is $(\varepsilon', \delta')$-DP with $\varepsilon' = \log(1 + \gamma (e^{\varepsilon}-1))$ and $\delta' = \gamma \left(1 - e^{\varepsilon'-\varepsilon} (1 - \delta)\right)$.
\end{theoremrep}
\begin{proof}
Fix $\mu = M(D)$ and $\nu = M(D')$ for some $D \simeq D'$.
Let $\omega$ be a witness that $K$ is $\gamma$-Doeblin and let $K_{\omega}$ be the constant Markov operator given by $K_{\omega}(x) = \omega$ for all $x$.
Doeblin's condition $K(x) \geq (1 - \gamma) \omega = (1 - \gamma) K_{\omega}(x)$ implies that the following is again a Markov operator:
\begin{align*}
\tilde{K} = \frac{K - (1-\gamma) K_{\omega}}{\gamma} \enspace.
\end{align*}
Thus, we can write $K$ as the mixture $K = (1-\gamma) K_{\omega} + \gamma \tilde{K}$ and then use the \emph{advanced joint convexity} property of $\divD_{e^{\varepsilon'}}$ \citep[Theorem 2]{conferenceversion} with $\varepsilon' = \log(1 + \gamma (e^{\varepsilon}-1))$ to obtain the following:
\begin{align*}
\divD_{e^{\varepsilon'}}(\mu K \| \nu K)
&=
\divD_{e^{\varepsilon'}}((1-\gamma) \omega + \gamma \mu \tilde{K} \| (1-\gamma) \omega + \gamma \nu \tilde{K}) \\
&=
\gamma \divD_{e^{\varepsilon}}(\mu \tilde{K} \| (1-\beta) \omega + \beta \nu \tilde{K}) \\
&\leq
\gamma \left((1-\beta) \divD_{e^{\varepsilon}}(\mu \tilde{K} \| \omega) + \beta \divD_{e^{\varepsilon}}(\mu \tilde{K} \| \nu \tilde{K})\right) \enspace,
\end{align*}
where $\beta = e^{\varepsilon' - \varepsilon}$.
Finally, using the immediate bounds $\divD_{e^{\varepsilon}}(\mu \tilde{K} \| \nu \tilde{K}) \leq \divD_{e^{\varepsilon}}(\mu \| \nu)$ and $\divD_{e^{\varepsilon}}(\mu \tilde{K} \| \omega) \leq 1$, we get
\begin{align*}
\divD_{e^{\varepsilon'}}(\mu K \| \nu K) \leq \gamma ( 1 - e^{\varepsilon' - \varepsilon} + e^{\varepsilon' - \varepsilon} \delta )
\enspace.
\end{align*}
\end{proof}

Our last amplification result applies to operators satisfying the ultra-mixing condition of \citet{del2003contraction}.
We say that a Markov operator $K \in \cK(\bX,\bY)$ is \emph{$\gamma$-ultra-mixing} if for all $x, x' \in \bX$ we have $K(x) \ll K(x')$ and $\frac{d K(x)}{d K(x')} \geq 1 - \gamma$.
The proof strategy is based on the ideas from the previous proof, although in this case the argument is slightly more technical as it involves a strengthening of the Doeblin condition implied by ultra-mixing that only holds under a specific support.

\begin{theoremrep}\label{thm:ultramixing}
Let $M$ be an $(\varepsilon,\delta)$-DP mechanism. If $K$ is a $\gamma$-ultra-mixing Markov operator, then the composition $K \circ M$ is $(\varepsilon', \delta')$-DP with $\varepsilon' = \log(1 + \gamma (e^{\varepsilon}-1))$ and $\delta' = \gamma \delta e^{\varepsilon'-\varepsilon}$.
\end{theoremrep}
\begin{proof}
Fix $\mu = M(D)$ and $\nu = M(D')$ for some $D \simeq D'$.
The proof follows a similar strategy as the one used in Theorem~\ref{thm:doeblin}, but coupled with the following consequence of the ultra-mixing property: for any probability distribution $\omega$ and $x \in \supp(\omega)$ we have $K(x) \geq (1-\gamma) \omega K$ \citep[Lemma 4.1]{del2003contraction}.
We use this property to construct a collection of mixture decompositions for $K$ as follows.
Let $\alpha \in (0,1)$ and take $\tilde{\omega} = (1-\alpha) \mu + \alpha \nu$ and $\omega = \tilde{\omega} K$.
By the ultra-mixing condition and the argument used in the proof of Theorem~\ref{thm:doeblin}, we can show that
\begin{align*}
\tilde{K} = \frac{K - (1-\gamma) K_{\omega}}{\gamma}
\end{align*}
is a Markov operator from $\supp(\mu) \cup \supp(\nu)$ into $\bX$. Here $K_{\omega}$ is the constant Markov operator $K_{\omega}(x) = \omega$.
Furthermore, the expression for $\tilde{K}$ and the definition of $\omega$ imply that
\begin{align}\label{eqn:omegaK}
\tilde{\omega} \tilde{K} = \frac{\tilde{\omega} K - (1-\gamma) \tilde{\omega} K_{\omega}}{\gamma}
= \omega \enspace.
\end{align}
Now note that the mixture decompositions $\mu K = (1-\gamma) \omega + \gamma \mu \tilde{K}$ and $\nu K = (1-\gamma) \omega + \gamma \nu \tilde{K}$ and the \emph{advanced joint convexity} property of $\divD_{e^{\varepsilon'}}$ \citep[Theorem 2]{conferenceversion} with $\varepsilon' = \log(1 + \gamma (e^{\varepsilon}-1))$ yield
\begin{align*}
\divD_{e^{\varepsilon'}}(\mu K \| \nu K)
&\leq
\gamma \left((1-\beta) \divD_{e^{\varepsilon}}(\mu \tilde{K} \| \omega) + \beta \divD_{e^{\varepsilon}}(\mu \tilde{K} \| \nu \tilde{K})\right)
\\
&\leq
\gamma \left((1-\beta) \divD_{e^{\varepsilon}}(\mu \tilde{K} \| \omega) + \beta \divD_{e^{\varepsilon}}(\mu \| \nu )\right)
\\
&\leq
\gamma \left((1-\beta) \divD_{e^{\varepsilon}}(\mu \tilde{K} \| \omega) + \beta \delta \right)
\enspace,
\end{align*}
where $\beta = e^{\varepsilon' - \varepsilon}$.
Using \eqref{eqn:omegaK} we can expand the remaining divergence above as follows:
\begin{align*}
\divD_{e^{\varepsilon}}(\mu \tilde{K} \| \omega)
&=
\divD_{e^{\varepsilon}}(\mu \tilde{K} \| \tilde{\omega} \tilde{K})
\leq
\divD_{e^{\varepsilon}}(\mu \| \tilde{\omega})
\leq
\alpha \divD_{e^{\varepsilon}}(\mu \| \nu) \leq \alpha \delta \enspace,
\end{align*}
where we used the definition of $\tilde{\omega}$ and joint convexity.
Since $\alpha$ was arbitrary, we can now take the limit $\alpha \to 0$ to obtain the bound $\divD_{e^{\varepsilon'}}(\mu K \| \nu K) \leq \gamma \delta e^{\varepsilon' - \varepsilon}$.
\end{proof}

\begin{proof}[Proof of Theorem~\ref{thm:mixing}]
It follows from Theorems~\ref{thm:dobrushin}, \ref{thm:genDobrushin}, \ref{thm:doeblin} and \ref{thm:ultramixing}.
\end{proof}

\end{toappendix}

\begin{figure}
\begin{floatrow}
\ffigbox[\FBwidth]{
{\small
\begin{tikzcd}[column sep=tiny, row sep=small, ampersand replacement=\&]
\gamma\text{-ultra-mixing}
\arrow[r, Rightarrow] \&
\gamma\text{-Doeblin}
\arrow[r, Rightarrow] \&
\gamma\text{-Dobrushin} \\
\&
(\gamma,\varepsilon)\text{-Dobrushin}
\arrow[ur, Rightarrow]
\end{tikzcd}
}
}{  \caption{Implications between mixing conditions}\label{fig:mixing}}
\capbtabbox{\renewcommand{\arraystretch}{1.4}
{\small
\begin{tabular}{c|c}
Mixing Condition & Local DP Condition \\
\hline
$\gamma$-Dobrushin & $(0,\gamma)$-LDP \\
$(\gamma,\varepsilon)$-Dobrushin & $(\varepsilon,\gamma)$-LDP \\
$\gamma$-Doeblin & Blanket condition\textsuperscript{\ref{fn:blanket}} \\
$\gamma$-ultra-mixing & $(\log\frac{1}{1-\gamma},0)$-LDP
\end{tabular}}
}{  \caption{Relation between mixing conditions and local DP}\label{tab:mixing}}
\end{floatrow}
\end{figure}

We conclude this section by noting that the conditions in Definition~\ref{def:mixing}, despite being quite natural, might be too stringent for proving amplification for DP mechanisms on, say, $\R^d$.
One way to see this is to interpret the operator $K : \bX \to \cP(\bY)$ as a mechanism and to note that the uniform mixing conditions on $K$ can be rephrased in terms of \emph{local DP} (LDP) \citep{kasiviswanathan2011can} properties (see Table~\ref{tab:mixing} for property\footnote{\label{fn:blanket}The \emph{blanket condition} is a necessary condition for LDP introduced in \citep{DBLP:journals/corr/abs-1903-02837} to analyze privacy amplification by shuffling.} translations)where the supremum is taken over \emph{any pair} of inputs (instead of neighboring ones).
This motivates the results on next section, where we look for finer
conditions to prove amplification by stochastic
post-processing.

\section{Amplification From Couplings}\label{sec:couplings}

In this section we turn to coupling-based proofs of amplification by post-processing under the R{\'e}nyi DP framework.
Our first result is a measure-theoretic generalization of the shift-reduction lemma in \citep{feldman2018privacy} which does not require the underlying space to be a normed vector space. The main differences in our proof are to use explicit couplings instead of the shifted R{\'e}nyi divergence which implicitly relies on the existence of a norm (through the use of $\divW_{\infty}$), and replace the identity $U + W - W = U$ between random variables which depends on the vector-space structure with a transport operators $H_{\pi}$ and $H_{\pi'}$ which satisfy $\mu H_{\pi'} H_{\pi} = \mu$ in a general measure-theoretic setting.

Given a coupling $\pi \in \cC(\mu,\nu)$ with $\mu, \nu \in \cP(\bX)$, we construct a \emph{transport} Markov operator $H_\pi : \bX \to \cP(\bX)$ with kernel\footnote{Here we use the convention $\frac{0}{0} = 0$.} $h_{\pi}(x,y) = \frac{p_{\pi}(x,y)}{p_{\mu}(x)}$, where $p_{\pi} = \frac{d \pi}{d \lambda \otimes \lambda}$ and $p_{\mu} = \frac{d \mu}{d \lambda}$.
It is immediate to verify from the definition that $H_{\pi}$ is a Markov operator satisfying the transport property $\mu H_{\pi} = \nu$ (see Lemma~\ref{lem:transportop}).

\begin{toappendix}
\begin{lemma}\label{lem:transportop}
The transport operator $H_\pi$ with $\pi \in \cC(\mu,\nu)$ satisfies $\mu H_{\pi} = \nu$.
\end{lemma}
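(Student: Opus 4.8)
The plan is to verify the identity directly from the definition of $H_\pi$, by unfolding the left action of a Markov operator on a measure and applying Tonelli. Throughout I work under the paper's standing convention that the relevant densities $p_\mu = \frac{d\mu}{d\lambda}$ and $p_\pi = \frac{d\pi}{d\lambda\otimes\lambda}$ exist. First I would confirm that $H_\pi$ is genuinely a Markov operator: for $x$ with $p_\mu(x) > 0$, since $\mu$ is the first marginal of $\pi$ we have $\int p_\pi(x,y)\,\lambda(dy) = p_\mu(x)$, hence $\int h_\pi(x,y)\,\lambda(dy) = 1$; for $x$ with $p_\mu(x) = 0$ the convention $\frac00 = 0$ makes $h_\pi(x,\cdot) \equiv 0$, but such $x$ form a $\mu$-null set and therefore play no role in the left action $\mu H_\pi$.

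Next, fix a measurable $E \subseteq \bX$ and compute
\[
(\mu H_\pi)(E) \;=\; \int H_\pi(x)(E)\,\mu(dx) \;=\; \int \Bigl(\int_E h_\pi(x,y)\,\lambda(dy)\Bigr)\, p_\mu(x)\,\lambda(dx).
\]
The key observation is the almost-everywhere cancellation $h_\pi(x,y)\, p_\mu(x) = p_\pi(x,y)$: on $\{p_\mu > 0\}$ this is immediate from the definition of $h_\pi$, and on the null set $\{p_\mu = 0\}$ both sides vanish $(\lambda\otimes\lambda)$-a.e.\ — the left side by the $\frac00 = 0$ convention, and the right side because $\int p_\pi(x,y)\,\lambda(dy) = p_\mu(x) = 0$ together with $p_\pi \geq 0$ forces $p_\pi(x,\cdot) = 0$ $\lambda$-a.e. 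Substituting this identity and applying Tonelli (all integrands are nonnegative) gives
\[
(\mu H_\pi)(E) \;=\; \int_E \int_{\bX} p_\pi(x,y)\,\lambda(dx)\,\lambda(dy) \;=\; \pi(\bX \times E) \;=\; \nu(E),
\]
where the last equality uses that $\nu$ is the second marginal of $\pi$. Since $E$ was arbitrary, $\mu H_\pi = \nu$.

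The part requiring care — rather than a genuine obstacle — is the bookkeeping around the null set $\{p_\mu = 0\}$ and the $\frac00$ convention, i.e.\ justifying that $h_\pi p_\mu = p_\pi$ holds $(\lambda\otimes\lambda)$-almost everywhere rather than merely where $p_\mu$ is strictly positive. Everything else is a routine application of Tonelli and the marginal constraints that define a coupling. If one wished to avoid densities altogether, the same argument goes through via the disintegration of $\pi$ along its first marginal, but working with $p_\pi$ and $p_\mu$ as the paper does keeps the computation transparent.
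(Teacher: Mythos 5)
Your proof is correct and follows essentially the same route as the paper's: fix an arbitrary event $E$, unfold $(\mu H_\pi)(E)$, cancel $p_\mu$ against the denominator of $h_\pi$ to recover $p_\pi$, and integrate out $x$ using the coupling (marginal) property. The only difference is that you spell out the bookkeeping around the $\mu$-null set $\{p_\mu = 0\}$ and the $\tfrac{0}{0}=0$ convention, and you explicitly verify that $H_\pi$ is a bona fide Markov operator; the paper elides these routine points.
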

\begin{proof}
Take an arbitrary event $E$ and note that:
\begin{align*}
(\mu H_{\pi})(E) &= \int_{\bX} H_{\pi}(x)(E) \mu(dx) = \int_{\bX} \int_E h_{\pi}(x,y) \mu(dx) \lambda(dy) = \int_{\bX} \int_E \frac{p_{\pi}(x,y)}{p_\mu(x)} \mu(dx) \lambda(dy)
\\ &= \int_{\bX} \int_E p_{\pi}(x,y) \lambda(dx) \lambda(dy) = \int_E p_{\nu}(y) \lambda(dy) = \nu(E) \enspace,
\end{align*}
where we used the coupling property $\int_{\bX} p_{\pi}(x,y) \lambda(dx) = p_{\nu}(y)$.
\end{proof}
\end{toappendix}

\begin{theoremrep}\label{thm:coupling}
Let $\alpha \geq 1$, $\mu, \nu \in \cP(\bX)$ and $K \in \cK(\bX,\bY)$. For any distribution $\omega \in \cP(\bX)$ and coupling $\pi \in \cC(\omega,\mu)$ we have
\begin{align}
\divR_{\alpha}(\mu K \| \nu K) &\leq
\divR_{\alpha}(\omega \| \nu) + \sup_{x \in \supp(\nu)} \divR_{\alpha}((H_{\pi} K)(x) \| K(x)) 
\enspace. \label{eqn:coupling}
\end{align}
\end{theoremrep}
\begin{proof}
Let $\omega \in \cP(\bX)$ and $\pi \in \cC(\omega,\mu)$ be as in the statement, and let $\pi' = C(\mu, \omega)$.
Note that taking $H_{\pi}$ and $H_{\pi'}$ to be the corresponding transport operators we have $\mu = \mu H_{\pi'} H_{\pi} = \omega H_{\pi}$.
Now, given a $\lambda \in \cP(\bX \times \bX)$ let $\Pi_2(\lambda) = \int \lambda(dx,\cdot)$ denote the marginal of $\lambda$ on the second coordinate.
In particular, if $\mu \otimes K$ denotes the joint distribution of $\mu$ and $\mu K$, then we have $\Pi_2(\mu \otimes K) = \mu K$.
Thus, by the data processing inequality we have
\begin{align*}
\divR_{\alpha}(\mu K \| \nu K)
&=
\divR_{\alpha}(\omega H_{\pi} K \| \nu K)
=
\divR_{\alpha}(\Pi_2(\omega \otimes H_{\pi} K) \| \Pi_2(\nu \otimes K)) 
\leq
\divR_{\alpha}(\omega \otimes H_{\pi} K \| \nu \otimes K) \enspace.
\end{align*}
The final step is to expand the RHS of the derivation above as follows:
\begin{align*}
e^{(\alpha-1) \divR_{\alpha}(\omega \otimes H_{\pi} K \| \nu \otimes K)}
&=
\iint \left(\frac{d (\omega \otimes H_{\pi} K)}{d (\nu \otimes K)}\right)^{\alpha} \nu(dx) K(x,dy)
\\
&=
\iint \left(\frac{p_{\omega}(x) \int h_{\pi}(x,dz) k(z,y)}{p_{\nu}(x) k(x,y)}\right)^{\alpha} \nu(dx) K(x,dy) \\
&=
\iint \left(\frac{p_{\omega}(x)}{p_{\nu}(x)}\right)^\alpha \left(\frac{\int h_{\pi}(x,dz) k(z,y)}{k(x,y)}\right)^{\alpha} \nu(dx) K(x,dy) \\
&\leq
\left(\int \left(\frac{p_{\omega}(x)}{p_{\nu}(x)}\right)^\alpha \nu(dx)\right)
\left(\sup_{x} \int \left(\frac{\int h_{\pi}(x,dz) k(z,y)}{k(x,y)}\right)^{\alpha} K(x,dy) \right)
\\
&=
e^{(\alpha-1) \divR_{\alpha}(\omega \| \nu)}
\cdot
e^{(\alpha-1) \sup_{x} \divR_{\alpha}((H_{\pi} K)(x) \| K(x))}
\enspace,
\end{align*}
where the supremums are taken with respect to $x \in \supp(\nu)$.
\end{proof}

Note that this result captures the data-processing inequality for R{\'e}nyi divergences since taking $\omega = \mu$ and the identity coupling yields $\divR_{\alpha}(\mu K \| \nu K) \leq \divR_{\alpha}(\mu \| \nu)$.
The next examples illustrate the use of this theorem to obtain amplification by operators corresponding to the addition of Gaussian and Laplace noise.

\begin{example}[Iterated Gaussian]\label{ex:tight}
We can show that \eqref{eqn:coupling} is tight and equivalent to the shift-reduction lemma \citep{feldman2018privacy} on $\R^d$ by considering the simple scenario of adding Gaussian noise to the output of a Gaussian mechanism.
In particular, suppose $M(D) = \cN(f(D),\sigma_1^2 I)$ for some function $f$ with global $L_2$-sensitivity $\Delta$ and the Markov operator $K$ is given by $K(x) = \cN(x,\sigma_2^2 I)$.
The post-processed mechanism is given by $(K \circ M) (D) = \cN(f(D),(\sigma_1^2 + \sigma_2^2) I)$, which satisfies $(\alpha,\frac{\alpha \Delta^2}{2 (\sigma_1^2 + \sigma_2^2)})$-RDP.
We now show how this result also follows from Theorem~\ref{thm:coupling}.
Given two datasets $D \simeq D'$ we write $\mu = M(D) = \cN(u,\sigma_1^2 I)$ and $\nu = M(D') = \cN(v,\sigma_1^2 I)$ with $\norm{u-v} \leq \Delta$.
We take $\omega = \cN(w,\sigma_1^2 I)$ for some $w$ to be determined later, and couple $\omega$ and $\mu$ through a translation $\tau = u - w$, yielding a coupling $\pi$ with $p_{\pi}(x,y) \propto \exp(-\frac{\norm{x-w}^2}{2 \sigma_1^2}) \one[y = x + \tau]$ and a transport operator $H_{\pi}$ with kernel $h_{\pi}(x,y) = \one[y = x + \tau]$.
Plugging these into \eqref{eqn:coupling} we get
\begin{align*}
\divR_{\alpha}(\mu K \| \nu K)
\leq
\frac{\alpha \norm{w - v}^2}{2 \sigma_1^2}
+
\sup_{x \in \R^d} \divR_{\alpha}(K(x+\tau) \| K(x))
=
\frac{\alpha}{2}\left(\frac{\norm{w - v}^2}{\sigma_1^2} + \frac{\norm{u - w}^2}{\sigma_2^2}\right)
\enspace.
\end{align*}
Finally, taking $w = \theta u + (1-\theta) v$ with $\theta = (1 + \frac{\sigma_2^2}{\sigma_1^2})^{-1}$ yields $\divR_{\alpha}(\mu K \| \nu K) \leq \frac{\alpha \Delta^2}{2 (\sigma_1^2 + \sigma_2^2)}$.
\end{example}

\begin{example}[Iterated Laplace]\label{ex:iterlap}

To illustrate the flexibility of this technique, we also apply it to get an amplification result for iterated Laplace noise, in which Laplace noise is added to the output of a Laplace mechanism. We begin by noting a negative result that there is no amplification in the $(\varepsilon,0)$-DP regime.

\begin{lemmarep}\label{lem:iterlappure}
Let $M(D) = \Lap(f(D),\lambda_1)$ for some function $f: \bD \rightarrow \R$ with global $L_1$-sensitivity $\Delta$ and let the Markov operator $K$ be given by $K(x) = \Lap(x,\lambda_2)$. The post-processed mechanism $(K \circ M)$ does not achieve $(\varepsilon,0)$-DP for any $\varepsilon < \frac{\Delta}{\max\{\lambda_1,\lambda_2\}}$. Note that $M$ achieves $(\frac{\Delta}{\lambda_1},0)$-DP and $K(f(D))$ achieves $(\frac{\Delta}{\lambda_2},0)$-DP.
\end{lemmarep}
\begin{proof}
This can be shown by directly analyzing the distribution arising from the sum of two independent laplace variables. Let $Lap2(\lambda_1,\lambda_2)$ denote this distribution. In the following equations, we assume $x>0$. Due to symmetry around the origin, densities at negative values can be found by looking instead at the corresponding positive location.

\begin{align*}
Lap2(x;\lambda_1,\lambda_2) &= \int_{-\infty}^{\infty} \frac{1}{2\lambda_1}\exp\left(-\frac{|x-t|}{\lambda_1}\right)\frac{1}{2\lambda_2}\exp\left(-\frac{|t|}{\lambda_2}\right)dt \\
&= \frac{1}{4\lambda_1\lambda_2}\int_{-\infty}^{\infty} \exp\left(-\frac{\lambda_2|x-t|+\lambda_1|t|}{\lambda_1\lambda_2}\right)dt \\
&= \frac{1}{4\lambda_1\lambda_2}\left(\int_{-\infty}^{0}e^{-\frac{\lambda_2(x-t)-\lambda_1t}{\lambda_1\lambda_2}} dt + \int_{0}^{x}e^{-\frac{\lambda_2(x-t)+\lambda_1t}{\lambda_1\lambda_2}} dt + \int_{x}^{\infty} e^{-\frac{-\lambda_2(x-t)+\lambda_1t}{\lambda_1\lambda_2}}dt\right) \\
&= \frac{1}{4\lambda_1\lambda_2}\left(\int_{-\infty}^{0}e^{-\frac{\lambda_2x-(\lambda_1+\lambda_2)t}{\lambda_1\lambda_2}} dt + \int_{0}^{x}e^{-\frac{\lambda_2x+(\lambda_1-\lambda_2)t}{\lambda_1\lambda_2}} dt + \int_{x}^{\infty} e^{-\frac{-\lambda_2x+(\lambda_1+\lambda_2)t}{\lambda_1\lambda_2}}dt\right)\\
&= \frac{1}{4\lambda_1\lambda_2}\left(\frac{e^{-\frac{\lambda_2x-(\lambda_1+\lambda_2)t}{\lambda_1\lambda_2}}}{(\lambda_1+\lambda_2)/\lambda_1\lambda_2}\big|_{t=-\infty}^{t=0} + \int_{0}^{x}e^{-\frac{\lambda_2x+(\lambda_1-\lambda_2)t}{\lambda_1\lambda_2}} dt + \frac{ e^{-\frac{-\lambda_2x+(\lambda_1+\lambda_2)t}{\lambda_1\lambda_2}}}{(\lambda_1+\lambda_2)/\lambda_1\lambda_2}\big|_{t=x}^{t=\infty}\right)
\end{align*}

The integration on the middle term varies between the cases $\lambda_1 = \lambda_2$ and $\lambda_1 \not = \lambda_2$. Finishing this derivation and replacing $x$ with $|x|$ to account for both positive and negative values, we get a complete expression for our $Lap2(\lambda_1,\lambda_2)$ density.

\begin{equation}\label{eq:lap2}
Lap2(x;\lambda_1,\lambda_2) = \begin{cases}
\frac{1}{4}\left((\frac{1}{\lambda_1+\lambda_2} + \frac{1}{\lambda_1-\lambda_2})e^{-\frac{|x|}{\lambda_1}} + (\frac{1}{\lambda_1+\lambda_2} - \frac{1}{\lambda_1-\lambda_2})e^{-\frac{|x|}{\lambda_2}}\right) & \text{if $\lambda_1 \neq \lambda_2$} \enspace, \\
\frac{1}{4\lambda_1^2}e^{-\frac{|x|}{\lambda_1}}(\lambda_1 + |x|) & \text{if $\lambda_1 = \lambda_2$} \enspace.
\end{cases}
\end{equation}

To finish this lemma, we need to derive the best $(\epsilon,0)$-DP guarantee offered by adding noise from $Lap2(\lambda_1,\lambda_2)$. From the post-processing property of DP and the commutivity of additive mechanisms, we know this guarantee is upper-bounded by $\Delta/\max\{\lambda_1,\lambda_2\}$. A direct computation of $\lim_{x\rightarrow \infty} \log(Lap2(x;\lambda_1,\lambda_2)/Lap2(x+\Delta;\lambda_1,\lambda_2))$ results in $\Delta/\max\{\lambda_1,\lambda_2\}$ in both cases of equation~\eqref{eq:lap2}. This arises from the limit depending entirely on the dominating term with the largest exponent. Therefore, this lower-bounds the privacy guarantee by the same value. Thus we can conclude this is the exact level of $(\epsilon,0)$-DP offered by this mechanism. 

\end{proof}

However, the iterated Laplace mechanism $K \circ M$ above still offers additional privacy in the relaxed RDP setting. An application of \eqref{eqn:coupling} allows us to identify some of this improvement.
Recall from \citep[Corollary 2]{DBLP:conf/csfw/Mironov17} that $M$ satisfies $(\alpha,\frac{1}{\alpha-1}\log g_\alpha(\frac{\Delta}{\lambda_1}))$-RDP with $g_\alpha(z) = \frac{\alpha}{2\alpha-1}\exp(z(\alpha-1)) + \frac{\alpha-1}{2\alpha-1}\exp(-z\alpha)$.
As in Example~\ref{ex:tight}, we take $\omega = \Lap(w,\lambda_1)$ for some $w$ to be determined later, and couple $\omega$ and $\mu$ through a translation $\tau = u - w$.
Through \eqref{eqn:coupling} we obtain
\begin{align*}
\divR_{\alpha}(\mu K \| \nu K)
&\leq
\frac{1}{\alpha-1}\log\left(g_\alpha\left(\frac{|w-v|}{\lambda_1}\right)\right) + \sup_{x \in \R} \divR_{\alpha}(K(x+\tau) \| K(x)) \\
&=
\frac{1}{\alpha-1}\log\left(g_\alpha\left(\frac{|w-v|}{\lambda_1}\right) g_\alpha\left(\frac{|u-w|}{\lambda_2}\right)\right)
\enspace.
\end{align*}
In the simple case where $\lambda_1 = \lambda_2$, an amplification result is observed from the log-convexity of $g_\alpha$, since $g_\alpha(a)g_\alpha(b) \leq g_\alpha(a+b)$. When $\lambda_1 \not = \lambda_2$, certain values of $w$ still result in amplification, but they depend nontrivially on $\alpha$. However, we also observe that this improvement vanishes as $\alpha \rightarrow \infty$, since the necessary convexity also vanishes.  In the limit, the lowest upper bound offered by \eqref{eqn:coupling} for $\divR_\infty$ (which reduces to $(\varepsilon,0)$-DP) matches the $\frac{\Delta}{\max\{\lambda_1,\lambda_2\}}$ result of Lemma~\ref{lem:iterlappure}.
\end{example}

\begin{example}[Lipschitz Kernel]
As a warm-up for the results in Section~\ref{sec:sgd}, we now re-work Example~\ref{ex:tight} with a slightly more complex Markov operator.
Suppose $\psi$ is an $L$-Lipschitz map\footnote{That is, $\norm{\psi(x) - \psi(y)} \leq L \norm{x - y}$ for any pair $x, y$.} and let $K(x) = \cN(\psi(x), \sigma_2^2 I)$.
Taking $M$ to be the Gaussian mechanism from Example~\ref{ex:tight}, we will show that the post-processed mechanism $K \circ M$ satisfies $(\alpha, \frac{\alpha \Delta^2}{2 \sigma_*^2})$-RDP with $\sigma_*^2 = \sigma_1^2 + \frac{\sigma_2^2}{L^2}$.
To prove this bound, we instantiate the notation from Example~\ref{ex:tight}, and use the same coupling strategy to obtain
\begin{align*}
\divR_{\alpha}(\mu K \| \nu K)
\leq
\frac{\alpha}{2}\left(\frac{\norm{w - v}^2}{\sigma_1^2} + \sup_{x \in \R^d} \frac{\norm{\psi(x + \tau) - \psi(x)}^2}{\sigma_2^2}\right)
\leq
\frac{\alpha}{2}\left(\frac{\norm{w - v}^2}{\sigma_1^2} + \frac{L^2 \norm{u - w}^2}{\sigma_2^2}\right)
\enspace,
\end{align*}
where the second inequality uses the Lipschitz property.
As before, the result follows from taking $w = \theta u + (1-\theta) v$ with $\theta = (1 + \frac{\sigma_2^2}{L^2 \sigma_1^2})^{-1}$.
This example shows that we get amplification (i.e.\ $\sigma_*^2 > \sigma_1^2$) for any $L < \infty$ and $\sigma_2 > 0$, although the amount of amplification decreases as $L$ grows.
On the other hand, for $L < 1$ the amplification is stronger than just adding Gaussian noise (Example~\ref{ex:tight}).
\end{example}

\subsection{Amplification by Iteration in Noisy Projected SGD with Strongly Convex Losses}\label{sec:sgd}

Now we use Theorem~\ref{thm:coupling} and the computations above to show that the proof of privacy amplification by iteration \citep[Theorem 22]{feldman2018privacy} can be extended to explicitly track the Lipschitz coefficients in a ``noisy iteration'' algorithm. In particular, this allows us to show an exponential improvement on the rate of privacy amplification by iteration in noisy SGD when the loss is strongly convex.
To obtain this result we first provide an \emph{iterated} version of Theorem~\ref{thm:coupling} in $\R^d$ with Lipschitz Gaussian kernels.
This version of the analysis introduces an explicit dependence on the $\divW_{\infty}$ distances along an ``interpolating'' path between the initial distributions $\mu, \nu \in \cP(\R^d)$ which could later be optimized for different applications. In our view, this helps to clarify the intuition behind the previous analysis of amplification by iteration.

\begin{theoremrep}\label{thm:Winf}
Let $\alpha \geq 1$, $\mu, \nu \in \cP(\R^d)$ and let $\bK \subseteq \R^d$ be a convex set.
Suppose $K_1, \ldots, K_r \in \cK(\R^d,\R^d)$ are Markov operators where $Y_i \sim K_i(x)$ is obtained as\footnote{Here $\Pi_{\bK}(x) = \argmin_{y \in \bK} \norm{x-y}$ denotes the projection operator onto the convex set $\bK \subseteq \R^d$.} $Y_i = \Pi_{\bK}(\psi_i(x) + Z_i)$ with $Z_i \sim \cN(0,\sigma^2 I)$, where the maps $\psi_i :\bK \to \R^d$ are $L$-Lipschitz for all $i \in [r]$.
For any $\mu_0,\mu_1,\ldots,\mu_r \in \cP(\R^d)$ with $\mu_0 = \mu$ and $\mu_r = \nu$ we have
\begin{align}\label{eqn:Winf}
\divR_{\alpha}(\mu K_1 \cdots K_r \| \nu K_1 \cdots K_r)
\leq
\frac{\alpha L^2}{2 \sigma^2} \sum_{i=1}^r L^{2(r-i)} \divW_{\infty}(\mu_i, \mu_{i-1})^2 \enspace.
\end{align}
Furthermore, if $L \leq 1$ and $\divW_{\infty}(\mu, \nu) = \Delta$, then
\begin{align}\label{eqn:Winf-smallL}
\divR_{\alpha}(\mu K_1 \cdots K_r \| \nu K_1 \cdots K_r)
\leq
\frac{\alpha \Delta^2 L^{r + 1}}{2 r \sigma^2} \enspace.
\end{align}
\end{theoremrep}

\begin{toappendix}

The proof of Theorem~\ref{thm:Winf} relies on the following technical lemma about the effect of a projected Lipschitz Gaussian operator on the $\infty$-Wasserstein distance between two distributions.

\begin{lemma}\label{lem:Winf-Lip}
Let $\bK \subseteq \R^d$ be a convex set and $\psi : \bK \to \R^d$ be $L$-Lipschitz. Suppose $K \in \cK(\R^d,\R^d)$ is a Markov operator where $Y \sim K(x)$ is obtained as $Y = \Pi_{\bK}(\psi(x) + Z)$ with $Z \sim \cN(0,\sigma^2 I)$.
Then, for any $\mu, \nu \in \cP(\R^d)$ we have $\divW_{\infty}(\mu K, \nu K) \leq L \divW_{\infty}(\mu, \nu)$.
\end{lemma}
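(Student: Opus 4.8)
The plan is to prove this by an explicit synchronous-coupling argument, exploiting the fact that projection onto a convex set is non-expansive and that the same noise realization $Z$ can be shared across the two inputs. First I would recall that for convex $\bK$ the map $\Pi_{\bK} : \R^d \to \bK$ satisfies $\norm{\Pi_{\bK}(a) - \Pi_{\bK}(b)} \leq \norm{a - b}$ for all $a, b$; this is standard. Combined with the $L$-Lipschitz property of $\psi$, this gives the pointwise bound $\norm{\Pi_{\bK}(\psi(x) + z) - \Pi_{\bK}(\psi(x') + z)} \leq \norm{\psi(x) - \psi(x')} \leq L \norm{x - x'}$, valid for \emph{every} fixed noise value $z$.

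Next I would fix $\epsilon > 0$ and choose a coupling $\pi \in \cC(\mu, \nu)$ that is $\epsilon$-nearly optimal for $\divW_{\infty}$, i.e.\ $\norm{X - X'} \leq \divW_{\infty}(\mu,\nu) + \epsilon$ holds almost surely for $(X, X') \sim \pi$; such a coupling exists by definition of the infimum (alternatively one can argue the infimum is attained by a compactness argument on the space of couplings, but the $\epsilon$-approximate version suffices). From $\pi$ I construct a coupling $\tilde{\pi}$ of $\mu K$ and $\nu K$ as follows: draw $(X, X') \sim \pi$, draw $Z \sim \cN(0, \sigma^2 I)$ independently, and output the pair $\big(\Pi_{\bK}(\psi(X) + Z),\, \Pi_{\bK}(\psi(X') + Z)\big)$. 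Since $\psi$ is Lipschitz (hence measurable) and $\Pi_{\bK}$ is continuous, this map is measurable, and by construction the first marginal is $\Law(\Pi_{\bK}(\psi(X) + Z))$ with $X \sim \mu$, which is exactly $\mu K$; likewise the second marginal is $\nu K$. So $\tilde\pi \in \cC(\mu K, \nu K)$.

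Then I apply the pointwise bound above along the coupling: almost surely under $\tilde\pi$,
\begin{align*}
\norm{\Pi_{\bK}(\psi(X) + Z) - \Pi_{\bK}(\psi(X') + Z)} \leq L \norm{X - X'} \leq L\big(\divW_{\infty}(\mu,\nu) + \epsilon\big) \enspace.
\end{align*}
Hence $\divW_{\infty}(\mu K, \nu K) \leq L(\divW_{\infty}(\mu,\nu) + \epsilon)$, and letting $\epsilon \to 0$ gives the claim. I do not expect a serious obstacle here; the only points requiring care are (i) confirming that sharing the noise $Z$ and gluing with $\pi$ genuinely produces a coupling with the correct marginals (a routine disintegration/Fubini check), and (ii) handling the possible non-attainment of the $\divW_{\infty}$ infimum, which the $\epsilon$-slack cleanly circumvents. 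The non-expansiveness of $\Pi_{\bK}$ is the one external fact being invoked, and it is precisely why convexity of $\bK$ is assumed.
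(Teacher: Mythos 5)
Your proof is correct and takes essentially the same approach as the paper: a synchronous coupling that shares the Gaussian noise $Z$ across both inputs, combined with the non-expansiveness of $\Pi_{\bK}$ on a convex set and the $L$-Lipschitz property of $\psi$. The only cosmetic difference is that you carry an $\epsilon$-slack to sidestep the attainment question for the $\divW_\infty$ infimum, whereas the paper takes a witness coupling directly; this is a minor tightening of rigor, not a different argument.
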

\begin{proof}
Let $\pi \in \cC(\mu,\nu)$ be a witness of $\divW_{\infty}(\mu, \nu) = \Delta$.
We construct a witness of $\divW_{\infty}(\mu K, \nu K) \leq L \Delta$ as follows: sample $(X,X') \sim \pi$ and $Z \sim \cN(0,\sigma^2 I)$ and then let $Y = \Pi_{\bK}(\psi(X) + Z)$ and $Y' = \Pi_{\bK}(\psi(X') + Z)$.
It is clear from the construction that $\Law((Y,Y')) \in \cC(\mu K, \nu K)$.
Furthermore, by the Lipschitz assumption on $\psi$ and that fact that the map $\Pi_{\bK}$ is contractive, the following holds almost surely:
\begin{align*}
\norm{Y - Y'}
&\leq
\norm{\psi(X) - \psi(X')}
\leq
L \norm{X - X'}
\leq
L \Delta \enspace.
\end{align*}
\end{proof}

\begin{proof}[Proof of Theorem~\ref{thm:Winf}]
We prove \eqref{eqn:Winf} by induction on $r$.
For the base case $r = 1$ we apply Theorem~\ref{thm:coupling} with $\omega = \nu$ and a coupling $\pi \in \cC(\nu,\mu)$ witnessing that $\divW_{\infty}(\mu, \nu) = \Delta$.
This choice of coupling guarantees that for any $x \in \supp(\nu)$ we have $\supp(H_{\pi}(x)) \subseteq B_\Delta(x)$, where $B_\Delta(x)$ is the ball of radius $\Delta$ around $x$.
Note also that $(H_{\pi} K_1)(x) = H_{\pi}(x) K_1$.
Thus, from \eqref{eqn:coupling} we obtain, using H{\"o}lder's inequality and the monotonicity of the logarithm, that:
\begin{align*}
\divR_{\alpha}(\mu K_1 \| \nu K_1)
&\leq
\sup_{x \in \supp(\nu)} \divR_{\alpha}((H_\pi K_1) (x) \| K_1(x))
\leq
\sup_{x \in \supp(\nu)} \sup_{y \in \supp(H_{\pi}(x))} \divR_{\alpha}(K_1 (y) \| K_1(x)) \\
&\leq
\sup_{\norm{x - y} \leq \Delta} \divR_{\alpha}(K_1(y) \| K_1(x)) \enspace.
\end{align*}
Now note that the Markov operator $K_1$ can be obtained by post-processing $\tilde{K}_1(x) = \cN(\psi_1(x), \sigma^2 I)$ with the projection $\Pi_{\bK}$.
Thus, by the data processing inequality we obtain
\begin{align*}
\sup_{\norm{x - y} \leq \Delta} \divR_{\alpha}(K_1(y) \| K_1(x))
&\leq
\sup_{\norm{x - y} \leq \Delta} \divR_{\alpha}(\tilde{K}_1(y) \| \tilde{K}_1(x))
\\
&=
\sup_{\norm{x - y} \leq \Delta} \frac{\alpha \norm{\psi_1(x) - \psi_1(y)}^2}{2 \sigma^2}
\leq
\frac{\alpha \Delta^2 L^2}{2 \sigma^2} \enspace.
\end{align*}

For the inductive case we suppose that \eqref{eqn:Winf} holds for some $r \geq 1$ and consider the case $r+1$, in which we need to bound $\divR_{\alpha}(\mu K_1 \cdots K_{r+1} \| \nu K_1 \cdots K_{r+1})$.
Let $\mu_0, \mu_1, \ldots, \mu_{r+1}$ be a sequence of distributions with $\mu_0 = \mu$ and $\mu_{r+1} = \nu$.
Applying \eqref{eqn:coupling} with $\omega = \mu_1 K_1 \cdots K_{r}$ and some coupling $\pi \in \cC(\mu_1 K_1 \cdots K_r, \mu K_1 \cdots K_r)$ we have
\begin{align*}
\divR_{\alpha}(\mu K_1 \cdots K_{r+1} \| \nu K_1 \cdots K_{r+1})
&\leq
\divR_{\alpha}(\mu_1 K_1 \cdots K_r \| \nu K_1 \cdots K_r) \\
&\;\; + \sup_{x \in \supp(\nu K_1 \cdots K_r)} \divR_{\alpha}((H_{\pi} K_{r+1})(x) \| K_{r+1}(x)) \enspace.
\end{align*}
By the inductive hypothesis, the first term in the RHS above can be bounded as follows:
\begin{align*}
\divR_{\alpha}(\mu_1 K_1 \cdots K_r \| \nu K_1 \cdots K_r)
&\leq
\frac{\alpha L^2}{2 \sigma^2} \sum_{i=1}^r L^{2(r-i)} \divW_{\infty}(\mu_{i+1}, \mu_i)^2 \\
&=
\frac{\alpha L^2}{2 \sigma^2} \sum_{i=2}^{r+1} L^{2(r+1-i)} \divW_{\infty}(\mu_i, \mu_{i-1})^2
\enspace.
\end{align*}
To bound the second term we assume the coupling $\pi$ is a witness of $\divW_{\infty}(\mu_1 K_1 \cdots K_r, \mu K_1 \cdots K_r) = \Delta'$, in which case a similar argument to the one we used in the base case yields:
\begin{align*}
\sup_{x} \divR_{\alpha}((H_{\pi} K_{r+1})(x) \| K_{r+1}(x))
&\leq
\sup_{x} \sup_{y \in \supp(H_{\pi}(x))} \divR_{\alpha}(K_{r+1}(y) \| K_{r+1}(x)) \\
&\leq
\sup_{\norm{x - y} \leq \Delta'} \divR_{\alpha}(K_{r+1}(y) \| K_{r+1}(x)) \\
&\leq
\frac{\alpha {\Delta'}^2 L^2}{2 \sigma^2}
\leq
\frac{\alpha L^{2r+2} \divW_{\infty}(\mu_1, \mu)^2}{2 \sigma^2}
\enspace,
\end{align*}
where the last inequality follows from Lemma~\ref{lem:Winf-Lip}.
Plugging the last three inequalities together we finally obtain
\begin{align*}
\divR_{\alpha}(\mu K_1 \cdots K_{r+1} \| \nu K_1 \cdots K_{r+1})
&\leq
\frac{\alpha L^{2r+2} \divW_{\infty}(\mu_1, \mu_0)^2}{2 \sigma^2} +
\frac{\alpha L^2}{2 \sigma^2} \sum_{i=2}^{r+1} L^{2(r+1-i)} \divW_{\infty}(\mu_i, \mu_{i-1})^2 \\
&=
\frac{\alpha L^2}{2 \sigma^2} \sum_{i=1}^{r+1} L^{2(r+1-i)} \divW_{\infty}(\mu_i, \mu_{i-1})^2 \enspace.
\end{align*}

When $L \leq 1$, we can obtain \eqref{eqn:Winf-smallL} from \eqref{eqn:Winf} as follows.
First, construct a sequence of distributions $\mu_0, \ldots, \mu_r$ such that $\Delta_i \triangleq \divW_{\infty}(\mu_i, \mu_{i-1}) = \Delta_0 L^{i}$ for $i \in [r]$, where $\Delta_0 = \frac{\Delta}{L} \frac{1-L}{1-L^r}$ is a normalization constant chosen such that $\sum_{i \in [r]} \Delta_i = \Delta$.
With this choice plugged into \eqref{eqn:Winf} we obtain
\begin{align*}
\divR_{\alpha}(\mu K_1 \cdots K_r \| \nu K_1 \cdots K_r)
\leq
\frac{\alpha L^2}{2 \sigma^2} r \Delta_0^2 L^{2r}
=
\frac{\alpha \Delta^2 L^{r+1} r}{2 \sigma^2} \left(\frac{L^{-\frac{1}{2}} - L^{\frac{1}{2}}}{L^{-\frac{r}{2}} - L^{\frac{r}{2}}}\right)^2
=
\frac{\alpha \Delta^2 L^{r+1} r}{2 \sigma^2} \phi(L)^2
\enspace.
\end{align*}
Now we note the function $\phi(L)$ defined above is increasing in $[0,1]$ and furthermore $\lim_{L \to 1} \phi(L) = \frac{1}{r}$, which can be checked by applying L'H{\^o}pital's rule twice. Thus, we can plug the inequality $\phi(L) \leq \frac{1}{r}$ above to obtain \eqref{eqn:Winf-smallL}. 

But we still need to show that a sequence $\mu_0, \ldots, \mu_r$ with $\Delta_i$ as above exists.
To construct such a sequence we let $\pi \in \cC(\mu,\nu)$ be a witness of $\divW_{\infty}(\mu,\nu) = \Delta$, take random variables $(X,X') \sim \pi$, and define $\mu_i = \Law((1-\theta_i) X + \theta_i X')$ with $\theta_i = \frac{\Delta_0}{\Delta} \sum_{j = 1}^i L^j = \frac{1 - L^i}{1 - L^r}$. Clearly we get $\mu_0 = \Law(X) = \mu$ and $\mu_r = \Law(X') = \nu$.

To see that $\divW_{\infty}(\mu_i, \mu_{i-1}) \leq \Delta_0 L^i$ we construct a coupling between $\mu_i$ and $\mu_{i-1}$ as follows: sample $(X,X') \sim \pi$ and let $Y = (1-\theta_i) X + \theta_i X'$ and $Y' = (1-\theta_{i-1}) X + \theta_{i-1} X'$. Clearly we have $\Law((Y,Y')) \in \cC(\mu_i, \mu_{i-1})$. Furthermore, with probability one the following holds:
\begin{align*}
\norm{Y - Y'} = \norm{(\theta_{i-1} - \theta_i) X - (\theta_{i-1} - \theta_i) X'}
=
\frac{\Delta_0}{\Delta} L^i \norm{X - X'} \leq \Delta_0 L^i \enspace,
\end{align*}
where the last inequality uses that $\pi$ is a witness of $\divW_{\infty}(\mu,\nu) \leq \Delta$.
This concludes the proof.
\end{proof}

\end{toappendix}

Note how taking $L = 1$ in the bound above we obtain $\frac{\alpha \Delta^2}{2 r \sigma^2} = O(1/r)$, which matches \citep[Theorem 1]{feldman2018privacy}. On the other hand, for $L$ strictly smaller than $1$, the analysis above shows that the amplification rate is $O(L^{r+1}/r)$ as a consequence of the maps $\psi_i$ being strict contractions, i.e.\ $\norm{\psi_i(x) - \psi_i(y)} < \norm{x - y}$.
For $L > 1$ this result is not useful since the sum will diverge; however, the proof could easily be adapted to handle the case where each $\psi_i$ is $L_i$-Lipschitz with some $L_i > 1$ and some $L_i < 1$. 
We now apply this result to improve the per-person privacy guarantees of noisy projected SGD (Algorithm~\ref{alg:sgd}) in the case where the loss function is smooth and strongly convex.

\begin{algorithm}\DontPrintSemicolon
\SetKwInput{KwHP}{Hyperparameters}
\caption{Noisy Projected Stochastic Gradient Descent --- $\nsgd(D,\ell,\eta,\sigma,\xi_0)$}\label{alg:sgd}
\KwIn{Dataset $D = (z_1,\ldots,z_n)$, loss function $\ell: \bK \times \bD \to \R$, learning rate $\eta$, noise parameter $\sigma$, initial distribution $\xi_0 \in \cP(\bK)$}
Sample $x_0 \sim \xi_0$\;
\For{$i \in [n]$}{
$x_i \leftarrow \Pi_{\bK}\left( x_{i-1} - \eta (\nabla_{x} \ell (x_{i-1},z_i) + Z) \right)$ with $Z \sim \cN(0,\sigma^2 I)$\;
}
\KwRet{$x_n$}
\end{algorithm}

A function $f : \bK \subseteq \R^d \to \R$ defined on a convex set is \emph{$\beta$-smooth} if it is continuously differentiable and $\nabla f$ is $\beta$-Lipschitz, i.e., $\norm{\nabla f(x) - \nabla f(y)} \leq \beta \norm{x - y}$, and is \emph~{$\rho$-strongly convex} if the function $g(x) = f(x) - \frac{\rho}{2} \norm{x}^2$ is convex.
When we say that a loss function $\ell : \bK \times \bD \to \R$ satisfies a property (e.g.\ smoothness) we mean the property is satisfied by $\ell(\cdot,z)$ for all $z \in \bD$.
Furthermore, we recall from \citep{feldman2018privacy} that a mechanism $M : \bD^n \to \bX$ satisfies \emph{$(\alpha,\epsilon)$-RDP at index $i$} if $\divR_{\alpha}(M(D) \| M(D')) \leq \epsilon$ holds for any pair of databases $D$ and $D'$ \emph{differing on the $i$th coordinate}.

\begin{theoremrep}\label{thm:sgd}
Let $\ell: \bK \times \bD \to \R$ be a $C$-Lipschitz, $\beta$-smooth, $\rho$-strongly convex loss function. If $\eta \leq \frac{2}{\beta + \rho}$, then $\nsgd(D,\ell,\eta,\sigma,\xi_0)$ satisfies $(\alpha,\alpha \epsilon_i)$-RDP at index $i$, where $\epsilon_n = \frac{2 C^2}{\sigma^2}$ and $\epsilon_i = \frac{2 C^2}{(n-i) \sigma^2} (1 - \frac{2 \eta \beta \rho}{\beta + \rho})^{\frac{n-i+1}{2}}$ for $1 \leq i \leq n-1$.
\end{theoremrep}

Since \citep[Theorem 23]{feldman2018privacy} shows that for smooth Lipschitz loss functions the guarantee at index $i$ of $\nsgd$ is given by $\epsilon_i = O(\frac{C^2}{(n-i) \sigma^2})$, our result provides an exponential improvement in the strongly convex case.
This implies, for example, that using the technique in \citep[Corollary 31]{feldman2018privacy} one can show that, in the strongly convex setting, running $\Theta(\log(d))$ additional iterations of $\nsgd$ on \emph{public} data is enough to attain (up to constant factors) the same optimization error as non-private SGD while providing privacy for all individuals.

\begin{toappendix}

To prove Theorem~\ref{thm:sgd} we will use the following well-known fact about convex optimization: gradient iterations on a strongly convex function are strict contractions.
The lemma below provides an expression for the contraction coefficient.

\begin{lemma}\label{lem:lipschitz}
Let $\bK \subseteq \R^d$ be a convex set and suppose the function $f: \bK \to \R$ is $\beta$-smooth and $\rho$-strongly convex.
If $\eta \leq \frac{2}{\beta + \rho}$, then the map $\psi(x) = x - \eta \nabla f(x)$ is $L$-Lipschitz on $\bK$ with $L = \sqrt{1 - \frac{2 \eta \beta \rho}{\beta + \rho}} < 1$.
\end{lemma}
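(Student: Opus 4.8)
The plan is to compare $\psi$ at two points by expanding a square. Fix $x,y \in \bK$ and write $u = x - y$ and $v = \nabla f(x) - \nabla f(y)$, so that $\psi(x) - \psi(y) = u - \eta v$ and hence
\[
\norm{\psi(x) - \psi(y)}^2 = \norm{u}^2 - 2\eta \langle v, u\rangle + \eta^2 \norm{v}^2 .
\]
The whole proof then reduces to a single quantitative inequality relating $\langle v, u\rangle$, $\norm{u}^2$ and $\norm{v}^2$ for a $\beta$-smooth, $\rho$-strongly convex $f$, namely the interpolation (co-coercivity) bound
\[
\langle \nabla f(x) - \nabla f(y), x - y\rangle \;\geq\; \frac{\beta\rho}{\beta+\rho}\,\norm{x-y}^2 \;+\; \frac{1}{\beta+\rho}\,\norm{\nabla f(x) - \nabla f(y)}^2 .
\]

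First I would establish this inequality, which is the only non-mechanical part of the argument. The standard route is to consider $g := f - \tfrac{\rho}{2}\norm{\cdot}^2$: strong convexity of $f$ makes $g$ convex, and since $\tfrac{\beta-\rho}{2}\norm{\cdot}^2 - g = \tfrac{\beta}{2}\norm{\cdot}^2 - f$ is convex by $\beta$-smoothness of $f$, the function $g$ is convex and $(\beta-\rho)$-smooth. Applying the Baillon--Haddad co-coercivity inequality to $g$ gives $\langle \nabla g(x) - \nabla g(y), x-y\rangle \geq \tfrac{1}{\beta-\rho}\norm{\nabla g(x) - \nabla g(y)}^2$; substituting $\nabla g = \nabla f - \rho\,\mathrm{id}$, expanding both sides, and collecting the terms in $\norm{x-y}^2$, $\norm{\nabla f(x)-\nabla f(y)}^2$ and $\langle \nabla f(x)-\nabla f(y), x-y\rangle$ yields exactly the displayed bound. (When $\rho = \beta$ this degenerates, but then $\beta$-smoothness and $\rho$-strong convexity force $\nabla f(x) - \nabla f(y) = \rho(x-y)$, so $\psi(x)-\psi(y) = (1-\eta\rho)(x-y)$ and the claim is immediate from $0 \le \eta\rho \le 1$.) Alternatively the interpolation inequality can simply be cited from a standard convex-optimization reference (e.g.\ Nesterov's lectures).

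With the interpolation bound in hand, I would substitute it into the expansion of $\norm{\psi(x)-\psi(y)}^2$ to get
\[
\norm{\psi(x)-\psi(y)}^2 \leq \Bigl(1 - \tfrac{2\eta\beta\rho}{\beta+\rho}\Bigr)\norm{u}^2 + \Bigl(\eta^2 - \tfrac{2\eta}{\beta+\rho}\Bigr)\norm{v}^2 .
\]
The coefficient of $\norm{v}^2$ is $\eta\bigl(\eta - \tfrac{2}{\beta+\rho}\bigr)$, which is $\le 0$ exactly under the hypothesis $\eta \le \tfrac{2}{\beta+\rho}$; dropping that term gives $\norm{\psi(x)-\psi(y)}^2 \le L^2 \norm{x-y}^2$ with $L^2 = 1 - \tfrac{2\eta\beta\rho}{\beta+\rho}$. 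It remains to check that $L$ is a well-defined real number in $[0,1)$: $L^2 < 1$ because $\eta,\beta,\rho > 0$, and $L^2 \ge 0$ because $\eta \le \tfrac{2}{\beta+\rho}$ gives $\tfrac{2\eta\beta\rho}{\beta+\rho} \le \tfrac{4\beta\rho}{(\beta+\rho)^2} \le 1$ by AM--GM. The main obstacle is thus the interpolation inequality; everything after it is a one-line expansion plus a sign check, and everything before it is just the identity $\psi(x)-\psi(y) = u - \eta v$.
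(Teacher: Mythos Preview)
Your proposal is correct and follows essentially the same route as the paper: expand $\norm{\psi(x)-\psi(y)}^2$, apply the interpolation inequality $\langle \nabla f(x)-\nabla f(y), x-y\rangle \geq \tfrac{\beta\rho}{\beta+\rho}\norm{x-y}^2 + \tfrac{1}{\beta+\rho}\norm{\nabla f(x)-\nabla f(y)}^2$, and use $\eta \leq \tfrac{2}{\beta+\rho}$ to drop the $\norm{v}^2$ term. The only difference is that the paper simply cites this inequality from Bubeck's lectures (Lemma~3.11 there), whereas you derive it via Baillon--Haddad applied to $g = f - \tfrac{\rho}{2}\norm{\cdot}^2$ and additionally verify $L^2 \in [0,1)$; both are fine and the arguments are otherwise identical.
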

\begin{proof}
This follows from a standard calculation in convex optimization; see e.g.\ \citep[Theorem 3.12]{bubeck2015convex}.
We reproduce the proof here for completeness.
Recall from \citep[Lemma 3.11]{bubeck2015convex} that if a function $f$ is $\beta$-smooth and $\rho$-strongly convex, then for any $x, y \in \bK$ we have
\begin{align*}
\frac{\beta \rho}{\beta + \rho} \norm{x-y}^2 + \frac{1}{\beta + \rho} \norm{\nabla f(x) - \nabla f(y)}^2 \leq \langle \nabla f(x) - \nabla f(y), x - y \rangle \enspace.
\end{align*}
Using this inequality, one can show the following:
\begin{align*}
\norm{\psi(x) - \psi(y)}^2
&=
\norm{(x - \eta \nabla f(x)) - (y - \eta \nabla f(y))}^2 \\
&=
\norm{x - y}^2 + \eta^2 \norm{\nabla f(x) - \nabla f(y)}^2
- 2 \eta \langle \nabla f(x) - \nabla f(y), x - y \rangle \\
&\leq
\left(1 - \frac{2 \eta \beta \rho}{\beta + \rho}\right) \norm{x-y}^2
+
\eta \left(\eta - \frac{2}{\beta + \rho}\right) \norm{\nabla f(x) - \nabla f(y)}^2 \\
&\leq
\left(1 - \frac{2 \eta \beta \rho}{\beta + \rho}\right) \norm{x-y}^2 \enspace,
\end{align*}
where the last inequality uses our assumption on $\eta$.
\end{proof}

\begin{proof}[Proof of Theorem~\ref{thm:sgd}]
Fix $1 \leq i \leq n-1$ and let $D \simeq D'$ be two datasets differing on the $i$th coordinate.
Let $\xi \triangleq \xi_{i-1} \in \cP(\R^d)$ represent the distribution of $x_{i-1}$ in the execution of Algorithm~\ref{alg:sgd} with input $D$.
Since $D$ and $D'$ differ only on the $i$th coordinate, the distribution of $x_{i-1}$ on input $D'$ is also $\xi$.
Now let $\psi_0(x) = x - \eta \nabla_{x} \ell (x,z_i)$, $\psi'_0(x) = x - \eta \nabla_{x} \ell (x,z_i')$, and $\psi_j(x) = x - \eta \nabla_{x} \ell (x,z_{i+j})$ for $j \in [r]$ with $r = n - i$.
Defining the Markov operators $K_j$, $j \in \{0,\ldots,r\}$, where $Y_j \sim K_j(x)$ is given by $K_j(x) = \Pi_{\bK}(\psi_j(x) + Z)$ with $Z \sim \cN(0,\eta^2 \sigma^2 I)$, we immediately obtain that the distribution of the output $x_n$ of $\nsgd(D,\ell,\eta,\sigma)$ can be written as $\xi K_0 K_1 \cdots K_r$.
Similarly, the distribution of the output of $\nsgd(D',\ell,\eta,\sigma)$ can be written as $\xi K'_0 K_1 \cdots K_r$, where $K'_0(x) = \cN(\psi'_0(x),\eta^2 \sigma^2 I)$.
Therefore, to obtain the R{\'e}nyi differential privacy of $\nsgd(D,\ell,\eta,\sigma)$ at index $i$ we need to bound $\divR_{\alpha}(\xi K_0 K_1 \cdots K_r \| \xi K'_0 K_1 \cdots K_r)$.

With the goal to apply Theorem~\ref{thm:Winf}, we first define $\mu = \xi K_0$ and $\nu = \xi K'_0$ and use the Lipschitz assumption on $\ell$ to conclude that $\divW_{\infty}(\mu,\nu) \leq 2 \eta C$.
Indeed, consider the coupling $\pi \in \cC(\mu,\nu)$ obtained by sampling $(Y,Y') \sim \pi$ as follows: sample $X \sim \xi$ and $Z \sim \cN(0,\eta^2 \sigma^2 I)$, and then let $Y = \Pi_{\bK}(\psi_0(X) + Z)$ and $Y' = \Pi_{\bK}(\psi'_0(X) + Z)$.
Now, since $\ell(\cdot,z_i)$ and $\ell(\cdot,z_i')$ are both $C$-Lipschitz and $\Pi_{\bK}$ is contractive, we see that the following holds almost surely under $\pi$:
\begin{align*}
\norm{Y - Y'}
&\leq
\norm{\psi_0(X) - \psi'_0(X)}
=
\eta \norm{\nabla_x \ell(X,z_i) - \nabla_x \ell(X,z'_i)} \\
&\leq
\eta \left(\norm{\nabla_x \ell(X,z_i)} + \norm{\nabla_x \ell(X,z_i)}\right)
\leq 2 \eta C \enspace.
\end{align*}
Thus, $\divW_{\infty}(\mu,\nu) \leq 2 \eta C$ as claimed.

Next we note that the assumption $\eta \leq \frac{2}{\beta + \rho}$ together with Lemma~\ref{lem:lipschitz} imply that $\psi_j$, $j \in [r]$, are all $L$-Lipschitz with $L = \sqrt{1 - \frac{2 \eta \beta \rho}{\beta + \rho}} < 1$.
Thus we can apply Theorem~\ref{thm:Winf} with $\Delta = 2 \eta C$ to obtain
\begin{align*}
\divR_{\alpha}(\xi K_0 K_1 \cdots K_r \| \xi K'_0 K_1 \cdots K_r)
&\leq
\frac{2 \alpha \eta^2 C^2 L^{n-i+1}}{(n-i) \eta^2 \sigma^2}
=
\frac{2 \alpha C^2}{(n-i) \sigma^2} \left(1 - \frac{2 \eta \beta \rho}{\beta + \rho}\right)^{\frac{n-i+1}{2}} \enspace.
\end{align*}
This concludes the analysis of the case $i < n$.

For the case $i = n$ we need to bound $\divR_{\alpha}(\xi K_0 \| \xi K'_0)$, where now $\xi$ is the distribution of $x_{n-1}$, and the operators $K_0$ and $K'_0$ are defined as above.
By H{\"o}lder's inequality, monotonicity of the logarithm, the contractiveness of $\Pi_{\bK}$ and the Lipschitz assumption on $\ell$ we have
\begin{align*}
\divR_{\alpha}(\xi K_0 \| \xi K'_0)
&\leq
\sup_{x \in \supp(\xi)} \divR_{\alpha}(K_0(x) \| K'_0(x))
\leq
\sup_{x \in \R^d} \divR_{\alpha}(K_0(x) \| K'_0(x))
\\
&\leq
\sup_{x \in \R^d} \frac{\alpha \eta^2 \norm{\nabla_x \ell(x,z_n) - \nabla_x \ell(x,z_n')}^2}{2 \eta^2 \sigma^2}
\leq
\frac{2 \alpha C^2}{\sigma^2} \enspace.
\end{align*}
\end{proof}

\end{toappendix}

\section{Diffusion Mechanisms}\label{sec:diffusions}

Now we go beyond the analysis from previous sections and simultaneously consider a family of Markov operators $\sg{P} = (P_t)_{t \geq 0}$ indexed by a continuous parameter $t$ and satisfying the semigroup property $P_t P_s = P_{t+s}$. Such $\sg{P}$ is called a \emph{Markov semigroup} and can be used to define a family of output perturbation mechanisms $M_t^f(D) = P_t(f(D))$ which are closed under post-processing by $\sg{P}$ in the sense that $P_s \circ M_t^f = M_{t+s}^f$.
The semigroup property greatly simplifies the analysis of privacy amplification by post-processing, since, for example, if we show that $M_t^f$ satisfies $(\alpha,\epsilon(t))$-RDP, then this immediately provides RDP guarantees for any post-processing of $M_t$ by any number of operators in $\sg{P}$.
The main result of this section provides such privacy analysis for mechanisms arising from symmetric diffusion Markov semigroups in Euclidean space.
We will show this class includes the well-known Gaussian mechanism, and also identify another interesting mechanism in this class arising from the Ornstein-Uhlenbeck diffusion process.

Roughly speaking, a \emph{diffusion} Markov semigroup $\sg{P} = (P_t)_{t \geq 0}$ on $\R^d$ corresponds to the case where $X_t \sim P_t(x)$ defines a Markov process $(X_t)_{t \geq 0}$ arising from a (time-homogeneous It{\^o}) \emph{stochastic differential equation} (SDE) of the form $X_0 = x$ and
$dX_t = u(X_t) dt + v(X_t) d W_t$,
where $W_t$ is a standard $d$-dimensional Wiener process, and the \emph{drift} $u : \R^d \to \R^d$ and \emph{diffusion} $v : \R^d \to \R^{d \times d}$ coefficients satisfy appropriate regularity assumptions.\footnote{The details are not relevant here since we work directly with semigroups satisfying Assumption~\ref{ass:diffusion}. We refer to \citep{oksendal2003stochastic} for details.}
In this paper, however, we shall follow \citep{bakry2013analysis} and
take a more abstract approach to Markov diffusion semigroups.  We
synthesize this approach by making a number of hypotheses on $\sg{P}$
that we discuss after introducing two core concepts from the theory of
Markov semigroups.

In the context of a Markov semigroup $\sg{P}$, the action of the Markov operators $P_t$ on functions can be used to define the \emph{generator} $L$ of the semigroup as the operator given by $L f = \frac{d}{dt} (P_t f) |_{t=0}$.
In particular, for a diffusion semigroup arising from the SDE $dX_t = u(X_t) dt + v(X_t) d W_t$ it is well-known that one can write the generator as $L f = \langle u, \nabla f \rangle + \frac{1}{2} \langle v v^\top, H(f) \rangle$, where $H(f)$ is the Hessian of $f$ and the second term is a Frobenius inner product.
Using the generator one also defines the so-called \emph{carr{\'e} du champ} operator $\Gamma(f,g) = \frac{1}{2}(L(f g) - f L g - g L f)$. This operator is bilinear and non-negative in the sense that $\Gamma(f) \triangleq \Gamma(f,f) \geq 0$.
The \emph{carr{\'e} du champ} operator operator can be interpreted as a device to measure how far $L$ is from being a first-order differential operator, since, e.g., if $L = \sum_{i} a_i \frac{\partial}{\partial x_i}$ then $L(f g) = f L g + g L f$ and therefore $\Gamma(f,g) = 0$. The operator $\Gamma$ can also be related to notions of curvature/contractivity of the underlying semigroup \citep{bakry2013analysis}.
Below we illustrate these concepts with the example of Brownian motion;
but first we formally state our assumptions on the semigroup.

\begin{assumption}\label{ass:diffusion}
Suppose the Markov semigroup $\sg{P} = (P_t)_{t \geq 0} \subset \cK(\R^d,\R^d)$ satisfies the following:
\begin{list}{{(\arabic{enumi})}}{\usecounter{enumi}
\setlength{\leftmargin}{11pt}
\setlength{\listparindent}{-1pt}
\setlength{\parsep}{-1pt}}	
\vspace*{-1ex}
\item There exists a unique non-negative invariant measure $\lambda$; that is, $\lambda P_t = \lambda$ for all $t \geq 0$. When the invariant measure is finite we normalize it to be a probability measure.
\item The operators $P_t$ admit a symmetric kernel $p_t(x,y) = p_t(y,x)$ with respect to the invariant measure. Equivalently, the invariant measure $\lambda$ is reversible for the Markov process $X_t$.
\item The generator $L$ satisfies the diffusion property $L \phi(f) = \phi'(f) L f + \phi''(f) \Gamma(f)$ for any differentiable $\phi : \R \to \R$. This is a chain rule property saying that $L$ is a second-order differential operator without constant terms.
\end{list}
\end{assumption}

\begin{example}[Brownian Motion]\label{ex:brownian}
The simplest diffusion process is the Brownian motion given by the simple SDE $d X_t = \sqrt{2} d W_t$., which corresponds to the semigroup $\sg{P}$ given by $P_t(x) = \cN(x, 2t)$.
In this case, the mechanism $M_t^f(D) = P_t(f(D))$ is a Gaussian mechanism with variance $\sigma^2 = 2 t$ and therefore satisfies $(\alpha, \frac{\alpha \Delta^2}{4 t})$-RDP, where $\Delta$ is the global $L_2$-sensitivity of $f$.
A direct substitution with $u = 0$ and $v = \sqrt{2} I$ shows that the semigroup's generator is the standard Laplacian in $\R^d$, $L = \nabla^2 = \sum_{i=1}^d \frac{\partial^2}{\partial x_i^2}$, and a simple calculation yields the expression $\Gamma(f,g) = \langle \nabla f, \nabla g \rangle$ for the carr{\'e} du champ operator.
Now we check that $\sg{P}$ satisfies the conditions in Assumption~\ref{ass:diffusion}.
First, we recall that Brownian motion has the Lebesgue measure $\lambda$ on $\R^d$ as its unique invariant measure; this happens to be a non-finite measure.
With respect to $\lambda$, the semigroup has kernel $p_t(x,y) \propto \exp(-\frac{\norm{x-y}^2}{4 t})$ which is clearly symmetric.
Finally, we use the chain rule for the gradient to verify that
\begin{align*}
Lf = \nabla^2 \phi(f) = \nabla( \phi'(f) \nabla f) = \phi''(f) \langle \nabla f, \nabla f \rangle + \phi'(f) \nabla^2 f = \phi''(f) \Gamma(f) + \phi'(f) Lf \enspace.
\end{align*}
\end{example}

Now we turn to the main result of this section, which provides a privacy analysis for the diffusion mechanism $M_t^f$ associated with an arbitrary symmetric diffusion Markov semigroup.
The key insight behind this result is that the carr{\'e} du champ operator of the semigroup provides a measure $\Lambda(t)$ of \emph{intrinsic sensitivity} for the mechanism $M_t^f$ defined as:
\begin{align*}
\Lambda(t) = \sup_{D \simeq D'} \int_{t}^{\infty} \kappa_{f(D),f(D')}(s) ds  \enspace,
\enspace \enspace
\text{where}
\enspace \enspace
\kappa_{x,x'}(t) = \sup_{y \in \R^d} \Gamma\left(\log \frac{p_t(x,y)}{p_t(x',y)}\right)
\enspace.
\end{align*}

\begin{theoremrep}\label{thm:diffusionRDP}
Let $f : \bD^n \to \R^d$ and let $\sg{P} = (P_t)_{t \geq 0}$ by a Markov semigroup on $\R^d$ satisfying Assumption~\ref{ass:diffusion}. 
If the mechanism $M^f_t(D) = P_t(f(D))$ has intrinsic sensitivity $\Lambda(t)$, then it satisfies $(\alpha, \alpha \Lambda(t))$-RDP for any $\alpha > 1$ and $t > 0$.
\end{theoremrep}

\begin{toappendix}
The proof of Theorem~\ref{thm:diffusionRDP} relies, first of all, on the following lemma.

\begin{lemma}\label{lem:exp-ode}
Let $\varphi : [t,\infty) \to \R$ be a function satisfying $\varphi(s) > 0$ and $\lim_{s \to \infty} \varphi(s) = 1$.
Suppose there exists a function $\kappa(s)$ and a constant $c > 0$ such that for all $s \geq t$ we have $\frac{d}{ds} \varphi(s) \geq - c \kappa(s) \varphi(s)$.
Then $\varphi(t) \leq \exp\left(c \int_{t}^{\infty} \kappa(s) ds\right)$.
\end{lemma}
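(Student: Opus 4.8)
The plan is to reduce the differential inequality to a statement about $\log \varphi$ and then integrate, in the style of a Grönwall estimate. Since $\varphi(s) > 0$ for all $s \geq t$, the function $\psi(s) = \log \varphi(s)$ is well defined and differentiable on $[t,\infty)$, with $\psi'(s) = \varphi'(s)/\varphi(s)$. Dividing the hypothesis $\varphi'(s) \geq -c\kappa(s)\varphi(s)$ by $\varphi(s) > 0$ gives $\psi'(s) \geq -c\kappa(s)$ for every $s \geq t$.

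Next I would integrate this inequality over $[t,T]$ for an arbitrary $T > t$. By the fundamental theorem of calculus, $\psi(T) - \psi(t) = \int_t^T \psi'(s)\,ds \geq -c\int_t^T \kappa(s)\,ds$, hence $\log\varphi(t) = \psi(t) \leq \psi(T) + c\int_t^T\kappa(s)\,ds$. Finally I would let $T \to \infty$: we may assume $\int_t^\infty \kappa(s)\,ds < \infty$, since otherwise the claimed bound is vacuous, and then $\int_t^T\kappa(s)\,ds \to \int_t^\infty\kappa(s)\,ds$; together with $\psi(T) = \log\varphi(T) \to \log 1 = 0$, obtained from $\lim_{s\to\infty}\varphi(s) = 1$ and continuity of $\log$ at $1$, this yields $\log\varphi(t) \leq c\int_t^\infty\kappa(s)\,ds$. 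Exponentiating gives $\varphi(t) \leq \exp\bigl(c\int_t^\infty\kappa(s)\,ds\bigr)$.

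I do not expect a genuine obstacle here; the only points that need a little care are the regularity needed for the fundamental theorem of calculus (differentiability of $\varphi$ as assumed, or more generally absolute continuity, suffices) and the passage to the limit inside the integral, which is justified by monotone convergence in the case $\kappa \geq 0$ relevant to the intended application to $\kappa_{x,x'}$.
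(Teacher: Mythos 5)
Your argument is exactly the paper's: pass to $\log\varphi$, observe its derivative is bounded below by $-c\kappa$, integrate via the fundamental theorem of calculus, and use $\lim_{s\to\infty}\log\varphi(s)=0$. The only difference is that you integrate to a finite $T$ and then let $T\to\infty$ rather than writing the improper integral directly, which is a minor presentational matter rather than a different route.
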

\begin{proof}
The bound follows from a direct application of the fundamental theorem of calculus.
Indeed, noting $\lim_{s \to \infty} \log \varphi(s) = 0$, we have
\begin{align*}
- \log \varphi(t) &= \lim_{s \to \infty} \log \varphi(s) - \log \varphi(t)
= \int_{t}^{\infty} \left(\frac{d}{d s} \log \varphi(s)\right) ds \\
&=
\int_{t}^{\infty} \left(\frac{\frac{d}{d s} \varphi(s)}{\varphi(s)}\right) ds
\geq
-c \int_{t}^{\infty} \kappa(s) ds \enspace.
\end{align*}
\end{proof}

In order to apply this lemma to bound the R{\'e}nyi DP of the diffusion mechanism $M^f_t$ we will need to compute the derivative with respect to $t$ of the R{\'e}nyi divergence between $P_t(x)$ and $P_t(x')$.
To be able to evaluate this derivative we will use some well-known relations between the kernel $p_t(x,y)$ of a semigroup with invariant measure $\lambda$ and its generator $L$, as well as further calculus rules for the carr{\'e} du champ operator $\Gamma$.
We now introduce the required properties without proof and recall they are standard facts in the theory of symmetric diffusion processes (see, e.g., \citep{bakry2013analysis}), and in particular they hold for any Markov semigroup satisfying Assumption~\ref{ass:diffusion}.

\begin{enumerate}
\item (Reversible Fokker-Planck Equation) For any $x,y,t$ we have $\frac{d}{dt} p_t(x,y) = L_y p_t(x,y)$, where $L_y$ denotes the generator operating on $y \mapsto p_t(x,y)$.
\item (Integration by Parts) We have $\int \Gamma(f,g) d \lambda = - \int (Lf) g d\lambda$ for any $f, g$ where the integrals are defined.
\item (Chain Rule for $\Gamma$) For any differentiable function $\phi$ we have $\Gamma(\phi(f),g) = \phi'(f) \Gamma(f,g)$ for any functions $f, g$ where the terms are defined.
\item (Product Rule for $\Gamma$) We have $\Gamma(f g, h) = f \Gamma(g,h) + g \Gamma(f,h)$ for any functions $f, g, h$ where the terms are defined.
\end{enumerate}

\begin{proof}[Proof of Theorem~\ref{thm:diffusionRDP}]
Let us define the function $\phi(u) = u^\alpha$ for $\alpha > 1$ and note that the derivatives of $\phi$ satisfy the following identities:
\begin{align}
\phi'(u) &= \alpha \frac{\phi(u)}{u} \enspace, \label{eqn:phi1} \\
\phi''(u) &= \alpha (\alpha - 1) \frac{\phi(u)}{u^2} \enspace, \label{eqn:phi2} \\
- u \phi''(u) &= \frac{d}{du}\left(\phi(u) - u \phi'(u)\right) \label{eqn:phi3} \enspace.
\end{align}
Now fix datasets $D \simeq D'$ and let $x = f(D)$ and $x' = f(D')$.
With this notation we have $M_t^f(D) = P_t(x)$, $M_t^f(D') = P_t(x')$ and $\divR_{\alpha}(P_t(x) \| P_t(x')) = \frac{1}{\alpha-1} \log \varphi(t)$, where we defined
\begin{align*}
\varphi(t) \triangleq \int \phi\left(\frac{p_t(x,y)}{p_t(x',y)}\right) p_t(x',y) \lambda(dy) \enspace.
\end{align*}
Since $\sg{P}$ has a unique invariant measure $\lambda$, then we must have $\lim_{t \to \infty} \frac{p_t(x,y)}{p_t(x',y)} = 1$ for any $x,y$, and therefore $\lim_{t \to \infty} \varphi(t) = 1$.
Thus, by Lemma~\ref{lem:exp-ode}, to obtain the desired bound it suffices to show that the inequality $\frac{d}{dt} \varphi(t) \geq - \alpha (\alpha-1) \kappa_{x,x'}(t) \varphi(t)$ holds for $t > 0$.

We will now show that this inequality is indeed satisfied.
For simplicity, let use define the notation $p_t(y) \triangleq p_t(x,y)$, $q_t(y) \triangleq p_t(x',y)$, $r_t(y) \triangleq \frac{p_t(y)}{q_t(y)}$ and $\partial_t \triangleq \frac{d}{dt}$.
With these, we now can apply the properties of $\sg{P}$ and $\phi$ to compute the derivative of $\varphi(t)$ as follows:\footnote{All integrals in this derivation are with respect to the invariant measure $d \lambda$, which is omitted for convenience.}
\begin{align*}
\partial_t \varphi(t)
&=
\int \partial_t \left(\phi(r_t) q_t\right)
&& \text{by Leibniz's rule ,} \\
&=
\int \phi'(r_t) (\partial_t r_t) q_t + \phi(r_t) (\partial_t q_t)
&& \text{by calculus of $\partial_t$ ,} \\
&=
\int \phi'(r_t) \frac{(L p_t) q_t - (L q_t) p_t}{q_t} + \phi(r_t) (L q_t)
&& \text{by Reversible Fokker-Planck Equation ,} \\
&=
\int \phi'(r_t) (L p_t) + (\phi(r_t) - r_t \phi'(r_t)) (L q_t)
&& \text{by re-arranging ,} \\
&=
- \int \Gamma(\phi'(r_t), p_t) + \Gamma(\phi(r_t) - r_t \phi'(r_t), q_t)
&& \text{by Integration by Parts ,} \\
&=
- \int \phi''(r_t) \Gamma(r_t, p_t) + \Gamma(\phi(r_t) - r_t \phi'(r_t), q_t)
&& \text{by Chain Rule for $\Gamma$ ,} \\
&=
- \int \phi''(r_t) \Gamma(r_t, p_t) - r_t \phi''(r_t) \Gamma(r_t, q_t)
&& \text{by Chain Rule for $\Gamma$ and \eqref{eqn:phi3} ,} \\
&=
- \alpha (\alpha - 1) \int \frac{\phi(r_t)}{r_t^2} ( \Gamma(r_t, p_t) - r_t \Gamma(r_t, q_t) )
&& \text{by \eqref{eqn:phi2} ,} \\
&=
- \alpha (\alpha - 1) \int \phi(r_t) q_t \left( \frac{q_t \Gamma(r_t, p_t) - p_t \Gamma(r_t, q_t)}{p_t^2} \right)
&& \text{by definition of $r_t$ .} \\
\end{align*}
The last step in the proof is to verify the following identify, which follows from the rules of calculus under $\Gamma$:
\begin{align*}
\Gamma(\log r_t, \log r_t)
&=
\frac{1}{r_t} \Gamma(r_t, \log r_t)
&& \text{by Chain Rule for $\Gamma$ ,} \\
&=
\frac{1}{r_t^2} \Gamma(r_t, r_t)
&& \text{by Chain Rule for $\Gamma$ ,} \\
&=
\frac{1}{r_t^2} \Gamma\left(r_t, \frac{p_t}{q_t}\right)
&& \text{by definition of $r_t$ ,} \\
&=
\frac{1}{r_t^2} \left( \frac{1}{q_t} \Gamma(r_t, p_t) + p_t \Gamma\left(r_t, \frac{1}{q_t}\right)\right)
&& \text{by Product Rule for $\Gamma$ ,} \\
&=
\frac{1}{r_t^2} \left( \frac{1}{q_t} \Gamma(r_t, p_t) - \frac{p_t}{q_t^2} \Gamma(r_t, q_t) \right)
&& \text{by Chain Rule for $\Gamma$ ,} \\
&=
\frac{q_t \Gamma(r_t, p_t) - p_t \Gamma(r_t, q_t)}{p_t^2}
&& \text{by definition of $r_t$ .} \\
\end{align*}
Now we finally put the last two derivations together to conclude that
\begin{align*}
\frac{d}{dt} \varphi(t)
&=
- \alpha (\alpha - 1) \int \phi\left(\frac{p_t(x,y)}{p_t(x',y)}\right) p_t(x',y) \Gamma\left(\log \frac{p_t(x,y)}{p_t(x',y)}\right) \lambda(dy) 
\\
&\geq
- \alpha (\alpha - 1) \kappa_{x,x'}(t) \int \phi\left(\frac{p_t(x,y)}{p_t(x',y)}\right) p_t(x',y)  \lambda(dy) \\
&=
- \alpha (\alpha - 1) \kappa_{x,x'}(t) \varphi(t)
\enspace.
\end{align*}
\end{proof}
\end{toappendix}

\begin{example}[Brownian Motion, Continued]
To illustrate the use of Theorem~\ref{thm:diffusionRDP} we show how it can be used to recover the privacy guarantees of the Gaussian mechanism through its connection with Brownian motion.
We let $\sg{P}$ be the semigroup from Example~\ref{ex:brownian} and start by using $\Gamma(f) = \norm{\nabla f}^2$ to compute $\kappa_{x,x'}(t)$ as follows:
\begin{align*}
\Gamma\left(\log \frac{p_t(x,y)}{p_t(x',y)}\right)
&=
\left\|\nabla_y \left(\frac{\norm{x'-y}^2 - \norm{x-y}^2}{4 t}\right) \right\|^2
= \frac{\norm{x-x'}^2}{4 t^2} \enspace.
\end{align*}
Now we use $\int_{t}^\infty \frac{1}{s^2} ds = \frac{1}{t}$ and $\Delta^2 = \sup_{D \simeq D'} \norm{f(D) - f(D')}^2$ to see that the mechanism associated with $\sg{P}$ has intrinsic sensitivity $\Lambda(t) = \frac{\Delta^2}{4 t}$, yielding the privacy guarantee from Example~\ref{ex:brownian}.
\end{example}

\subsection{The Ornstein-Uhlenbeck Mechanism}

Beyond Brownian motion, another well-known diffusion process is the \emph{Ornstein-Uhlenbeck} process with parameters $\theta, \rho > 0$ given by the SDE $dX_t = -\theta X_t dt + \sqrt{2} \rho dW_t$.
This diffusion process is associate with the semigroup $\sg{P} = (P_t)_{t \geq 0}$ given by $P_t(x) = \cN(e^{-\theta t} x, \frac{\rho^2}{\theta} (1 - e^{-2\theta t}) I)$.
One interpretation of this diffusion process is to think of $X_t$ as a Brownian motion with variance $\rho^2$ applied to a mean reverting flow that pulls a particle towards the origin at a rate $\theta$.
In particular, the mechanism $M_t^f(D)$ is given by releasing $e^{-\theta t} f(D) + \cN(0,\frac{\rho^2}{\theta} (1 - e^{-2\theta t}))$.

Taking the limit $t \to \infty$ one sees that the (unique) invariant measure of $\sg{P}$ is the Gaussian distribution $\lambda = \cN(0, \frac{\rho^2}{\theta} I)$.
From the SDE characterization of this process it is easy to check that its generator is $Lf = \rho^2 \nabla^2 f - \theta \langle x, \nabla f \rangle$ and the associated carr{\'e} du champ operator is $\Gamma(f,g) = \rho^2 \langle \nabla f, \nabla g \rangle$.
Thus, $\sg{P}$ satisfies conditions (1) and (3) in Assumption~\ref{ass:diffusion}.
To check the symmetry condition we apply a change of measure to the Gaussian density $\tilde{p}_t(x,y)$ of $P_t$ with respect to the Lebesgue measure to get its density w.r.t.\ $\lambda$:
\begin{align*}
p_t(x,y)
= \frac{\tilde{p}_t(x,y)}{\tilde{p}_{\lambda}(y)}
\propto
\frac{\exp\left(-\frac{\theta \norm{y - e^{-\theta t} x}^2}{2 \rho^2 (1 - e^{-2 \theta t})}\right)}{\exp\left(-\frac{\theta \norm{y}^2}{2 \rho^2}\right)}
=
\exp\left(-\theta \frac{ \norm{x}^2 - 2 e^{\theta t} \langle x, y \rangle + \norm{y}^2 }{2 \rho^2 (e^{2 \theta t} - 1)}\right) \enspace,
\end{align*}
where $\tilde{p}_\lambda$ is the density of $\lambda$ w.r.t.\ the Lebesgue measure.
Thus, Theorem~\ref{thm:diffusionRDP} yields the following.

\begin{corollaryrep}\label{cor:OUpriv}
Let $f : \bD^n \to \R^d$ have global $L_2$-sensitivity $\Delta$ and $\sg{P} = (P_t)_{t \geq 0}$ be the Ornstein-Uhlenbeck semigroup with parameters $\theta, \rho$.
For any $\alpha > 1$ and $t > 0$ the mechanism $M_t^f(D) = P_t(f(D))$ satisfies $(\alpha, \alpha \Lambda(t))$-RDP with $\Lambda(t) = \frac{\theta \Delta^2}{2 \rho^2 (e^{2 \theta t} - 1)}$.
\end{corollaryrep}
\begin{proof}
Using the expression of the kernel of $P_t$ with respect to the invariant measure $\lambda$ we first compute
\begin{align*}
\log\left(\frac{p_t(x,y)}{p_t(x',y)}\right) = \frac{\theta e^{\theta t} \langle x - x', y \rangle}{\rho^2 (e^{2 \theta t} - 1)} \enspace.
\end{align*}
Next we use the expression $\Gamma(f) = \rho^2 \norm{\nabla f}^2$ for the carr{\'e} du champ operator to obtain
\begin{align*}
\kappa_{x,x'}(t)
=
\frac{\theta^2 e^{2 \theta t} \norm{x-x'}^2}{\rho^2 (e^{2 \theta t} - 1)^2} \enspace.
\end{align*}
Applying the easily verifiable integral formula
\begin{align*}
\int_{t}^{\infty} \frac{e^{2 \theta s}}{(e^{2 \theta s} - 1)^2} ds
=
\frac{1}{2 \theta (e^{2 \theta t} - 1)}
\end{align*}
in the definition of $\Lambda(t)$ yields the desired result.
\end{proof}

The Ornstein-Uhlenbeck mechanism is not an unbiased mechanism since $\Ex[M_t^f(D)] = e^{-\theta t} f(D)$.
This bias is the reason why the privacy guarantee in Corollary~\ref{cor:OUpriv} exhibits a rate $O(e^{-2 \theta t})$, while, for example, the Brownian motion mechanism only exhibits a rate $O(t^{-1})$.
In particular, the Ornstein-Uhlenbeck mechanism achieves its privacy not only by introducing noise, but also by shrinking $f(D)$ towards a data-independent point (the origin in this case); this effectively corresponds to reducing the sensitivity of $f$ from $\Delta$ to $e^{-\theta t} \Delta$.
This provides a way to trade-off variance and bias in the mean-squared error (MSE) incurred by privately releasing $f(D)$ in a similar way that can be achieved by post-processing the Gaussian mechanism when $f(D)$ is known to be bounded.

To formalize this result we define the \emph{mean squared error} $\errOU(\theta, \rho, t)$ of the Ornstein-Uhlenbeck mechanism with parameters $\theta, \rho$ at time $t$, which is given by:
\begin{align}\label{eqn:errOU}
\errOU(\theta, \rho, t) \triangleq \Ex[\norm{f(D) - M_t^f(D)}^2]
&=
(1 - e^{-\theta t})^2 \norm{f(D)}^2 + \frac{d \rho^2}{\theta} (1 - e^{-2 \theta t})
\enspace.
\end{align}
Similarly, we define $\errGM(\theta, \rho, t)$ as the mean squared error of a Gaussian mechanism with the same privacy guarantees as $M_t^f$ with parameters $\theta, \rho$.
In particular, we have $\errGM(\theta, \rho, t) = d \tilde{\sigma}^2$, where $\tilde{\sigma}^2 \triangleq \frac{\rho^2 (e^{2\theta t}-1)}{\theta}$ (cf.\ Corollary~\ref{cor:OUpriv}).
We also note the post-processed Gaussian mechanism (PGM) $D \mapsto \beta (f(D) + \cN(0,\tilde{\sigma}^2 I))$ which multiplies the output by a scalar $\beta$ optimized to minimize the MSE under the condition $\norm{f(D)} \leq R$ yields error $\errPGM(\theta, \rho, t) \leq \errGM(\theta, \rho, t) (1 + \frac{d \tilde{\sigma}^2}{R^2})^{-1}$.

\begin{theoremrep}\label{thm:errorOU}
Suppose $f : \bD^n \to \R^d$ has global $L_2$-sensitivity $\Delta$ and satisfies $\sup_{D} \norm{f(D)} \leq R$.
If $\theta R^2 \leq 4 d \rho^2$ then we have $\frac{\errOU(\theta, \rho, t)}{\errGM(\theta, \rho, t)} \leq 1$ for all $t \geq 0$ and $\lim_{t \to \infty} \frac{\errOU(\theta, \rho, t)}{\errGM(\theta, \rho, t)} = 0$.
In particular, taking $\theta = \log\left(1 + \frac{d \Delta^2}{2 \epsilon R^2}\right)$ and $\rho^2 = \frac{\theta \Delta^2}{2 \epsilon (e^{2 \theta} - 1)}$ with $\epsilon > 0$, the mechanism $M_t^f$ satisfies $(\alpha, \alpha \epsilon)$-RDP at time $t = 1$ and we have $\frac{\errOU(\theta, \rho, 1)}{\errGM(\theta, \rho, 1)} \leq \left(1 + \frac{d \Delta^2}{2 \epsilon R^2}\right)^{-1}$.
\end{theoremrep}
\begin{proof}
First note that at time $t = 0$ we have $\errOU(\theta, \rho, 0) = \errGM(\theta, \rho, 0) = 0$.
Thus, to see that $\errOU(\theta, \rho, t) \leq \errGM(\theta, \rho, t)$ for $t > 0$ it is enough to check that $\frac{d}{dt} \errOU(\theta, \rho, t) \leq \frac{d}{dt} \errGM(\theta, \rho, t)$ for $t \geq 0$. Indeed, differentiating \eqref{eqn:errOU}, this follows from the boundedness of $f$ and $\theta R^2 \leq 4 d \rho^2$ by noting:
\begin{align*}
\frac{d}{dt} \errOU(\theta, \rho, t)
&\leq
2 d \rho^2 e^{-2 \theta t} + 2 \theta R^2 e^{-\theta t} (1 - e^{-\theta t})
\leq
2 d \rho^2 e^{-2 \theta t} + 8 d \rho^2 e^{-\theta t} (1 - e^{-\theta t}) \\
&=
2 d \rho^2 e^{-2 \theta t} (4 e^{\theta t} - 3)
\leq
2 d \rho^2 e^{2 \theta t} = \frac{d}{dt} \errGM(\theta, \rho, t) \enspace,
\end{align*}
where the last two steps use the inequality $4 e^s - 3 \leq e^{4 s}$, $s \geq 0$, and the definition of $\tilde{\sigma}^2$.
To see that the ratio converges to $0$ we just observe that the limit of $\errOU(\theta, \rho, t)$ is finite while $\errGM(\theta, \rho, t)$ grows to infinity as $t \to \infty$.

The privacy bound in the case with a fixed level of privacy at $t = 1$ follows from directly from Corollary~\ref{cor:OUpriv}. The error bound follows substituting the chosen parameters in the expression for the mean squared error.
In the first place, we use the definitions of $\tilde{\sigma}^2$ and $\rho^2$ to get
\begin{align*}
\errGM(\theta,\rho,1) = d \tilde{\sigma}^2 = \frac{d \rho^2 (e^{2\theta} - 1)}{\theta}
= \frac{d \Delta^2}{2 \epsilon} \enspace.
\end{align*}
On the other hand, substituting the choice for $\rho$ on the error of the Ornstein-Uhlenbeck mechanism and using the boundedness of $f$ we get
\begin{align*}
\errOU(\theta,\rho,1)
\leq
(1 - e^{-\theta})^2 R^2 + \frac{d \rho^2}{\theta} (1 - e^{-2 \theta})
=
(1 - e^{-\theta})^2 R^2 + \frac{d \Delta^2}{2 \epsilon} e^{-2 \theta} \enspace.
\end{align*}
Finally, plugging the choice of $\theta$ in this last expression yields:
\begin{align*}
(1 - e^{-\theta})^2 R^2 + \frac{d \Delta^2}{2 \epsilon} e^{-2 \theta}
=
\frac{R^2 \left(\frac{d \Delta^2}{2 \epsilon R^2}\right)^2 + R^4 \frac{d \Delta^2}{2 \epsilon R^2}}{\left(R^2 + \frac{d \Delta^2}{2 \epsilon}\right)^2}
=
\frac{d \Delta^2}{2 \epsilon} \frac{1}{1 + \frac{d \Delta^2}{2 \epsilon R^2}} \enspace.
\end{align*}
\end{proof}

This result not only shows that the Ornstein-Uhlenbeck mechanism is uniformly better than the Gaussian mechanism for any level of privacy, but also shows that in this mechanism the error always stays bounded and can attain the same level of error as the Gaussian mechanism with optimal post-processing.
To see this note that with the choices of parameters made in the second statement give $\errGM(\theta,\rho,1) = \frac{d \Delta^2}{2 \epsilon}$ and therefore $\errOU(\theta,\rho,1) \leq \frac{d \Delta^2 R^2}{2 \epsilon R^2 + d \Delta^2}$, which behaves like $O(R^2)$ with $\Delta$ constant and either $\epsilon \to 0$ or $d \to \infty$.

\section{Conclusion}
We have undertaken a systematic study of amplification by
post-processing. Our results yield improvements over recent work on
amplification by iteration, and introduce a new Ornstein-Uhlenbeck mechanism
which is more accurate than the Gaussian mechanism. In the future it
would be interesting to study applications of amplification by
post-processing. One promising application is \emph{Hierarchical Differential
Privacy}, where information is released under increasingly strong
privacy constraints (e.g.\, to a restricted group within a company,
globally within a company, and finally to outside parties).

\paragraph{Acknowledgements}
MG was partially supported by NSF grant CCF-1718220. 
\subsubsection*{Acknowledgments}
Marco Gaboardi's work was partially supported by NSF grant \#1718220. 

\bibliography{main}
\bibliographystyle{abbrvnat}

\end{document}